\newtheorem{theorem}{Theorem}
\newtheorem{lemma}{Lemma}
\newtheorem{proposition}{Proposition}
\newtheorem{corollary}{Corollary}
\newtheorem{definition}{Definition}
\newtheorem{remark}{Remark}
\newtheorem{assumption}{Assumption}
\icmltitlerunning{Optimal Experiment Design for Causal Discovery from Fixed Number of Experiments}
\begin{document}

\twocolumn[
\icmltitle{Optimal Experiment Design for Causal Discovery from\\Fixed Number of Experiments}

% It is OKAY to include author information, even for blind
% submissions: the style file will automatically remove it for you
% unless you've provided the [accepted] option to the icml2016
% package.
\icmlauthor{AmirEmad Ghassami}{ghassam2@illinois.edu}
\icmladdress{Department of ECE, Coordinated Science Laboratory,\\
			University of Illinois at Urbana-Champaign Urbana, IL 61801 USA}
\icmlauthor{Saber Salehkaleybar}{sabersk@illinois.edu}
\icmladdress{Coordinated Science Laboratory,\\
			University of Illinois at Urbana-Champaign Urbana, IL 61801 USA}
\icmlauthor{Negar Kiyavash}{kiyavash@illinois.edu}
\icmladdress{Department of ECE and ISE, Coordinated Science Laboratory,\\
			University of Illinois at Urbana-Champaign Urbana, IL 61801 USA}

% You may provide any keywords that you
% find helpful for describing your paper; these are used to populate
% the "keywords" metadata in the PDF but will not be shown in the document
\icmlkeywords{boring formatting information, machine learning, ICML}

\vskip 0.3in
]

\begin{abstract}
%We study the problem of learning a causal structure over a set of random variables when the experimenter is allowed to perform at most $M$ experiments in a non-adaptive manner. The non-adaptivity enables the experimenter to perform the experiments in parallel.
%We consider the problem of minimizing the unlearned portion of the graph with a limited number of experiments for both Bayesian and minimax cases.
 %%We assume that each experiment consists of a single intervention, which makes the method suitable for the applications in which simultaneous randomization of the variables is not feasible.
 %%The main question addressed in this work is that under the described condition, for a limited number of experiments, what is the maximum portions of the causal structure that can be learned.
 %After finding a theoretical solution for this problem, we propose the ProBal algorithm, which designs the experiments in a time efficient manner.
%We show that for bounded degree graphs, in the minimax case and in the Bayesian case with uniform prior, our proposed algorithm is a $\rho$-approximation algorithm, where $\rho$ is independent of the order of the underlying graph. Both synthesized and real data show that the performance of the ProBal algorithm is very close to the optimal solution.
We study the problem of causal structure learning over a set of random variables when the experimenter is allowed to perform at most $M$ experiments in a non-adaptive manner. We consider the optimal learning strategy in terms of minimizing the portions of the structure that remains unknown given the limited number of experiments in both Bayesian and minimax setting. We characterize the theoretical optimal solution and propose an algorithm, which designs the experiments efficiently in terms of time complexity. We show that for bounded degree graphs, in the minimax case and in the Bayesian case with uniform priors, our proposed algorithm is a  $\rho$-approximation algorithm, where $\rho$ is independent of the order of the underlying graph. Simulations on both synthetic and real data show that the performance of our algorithm is very close to the optimal solution.
\end{abstract}

\section{Introduction}
\label{sec:intro}

Causal structures are commonly represented by directed acyclic graphs (DAGs), where the vertices of the graph are random variables and a directed edge from $X$ to $Y$ indicates that variable $X$ is a direct cause of variable $Y$ \cite{pearl2009causality, spirtes2000causation, greenland1999causal}.
Given a set of variables, there are two main approaches for uncovering the causal relationships among them.
First to perform conditional independence tests on the set of variables based on observational measurements \cite{mooij2016distinguishing}.
The second involves intervening on some of variables to recover their causal effect on the rest of the variables.
Unlike observational only tests, sufficient experiments involving interventions can uniquely identify the underlying causal graph completely.

In the study of intervention-based inference approach,
often a setup in which the experimenter performs $M$ experiments on the set of variables is considered. In each experiment a set of at most $k$ variables are intervened on.
In this setting, two natural questions arise:
\vspace{-3mm}
\begin{enumerate}
\item What is the smallest required number of experiments in order to learn all the causal relations?
\item For a fixed number of experiments, what portion of the causal relationships can be learned?
\end{enumerate}
\vspace{-3mm}
\begin{figure}[t]
\vskip 0.02in
\begin{center}
\centerline{\includegraphics[scale=0.234]{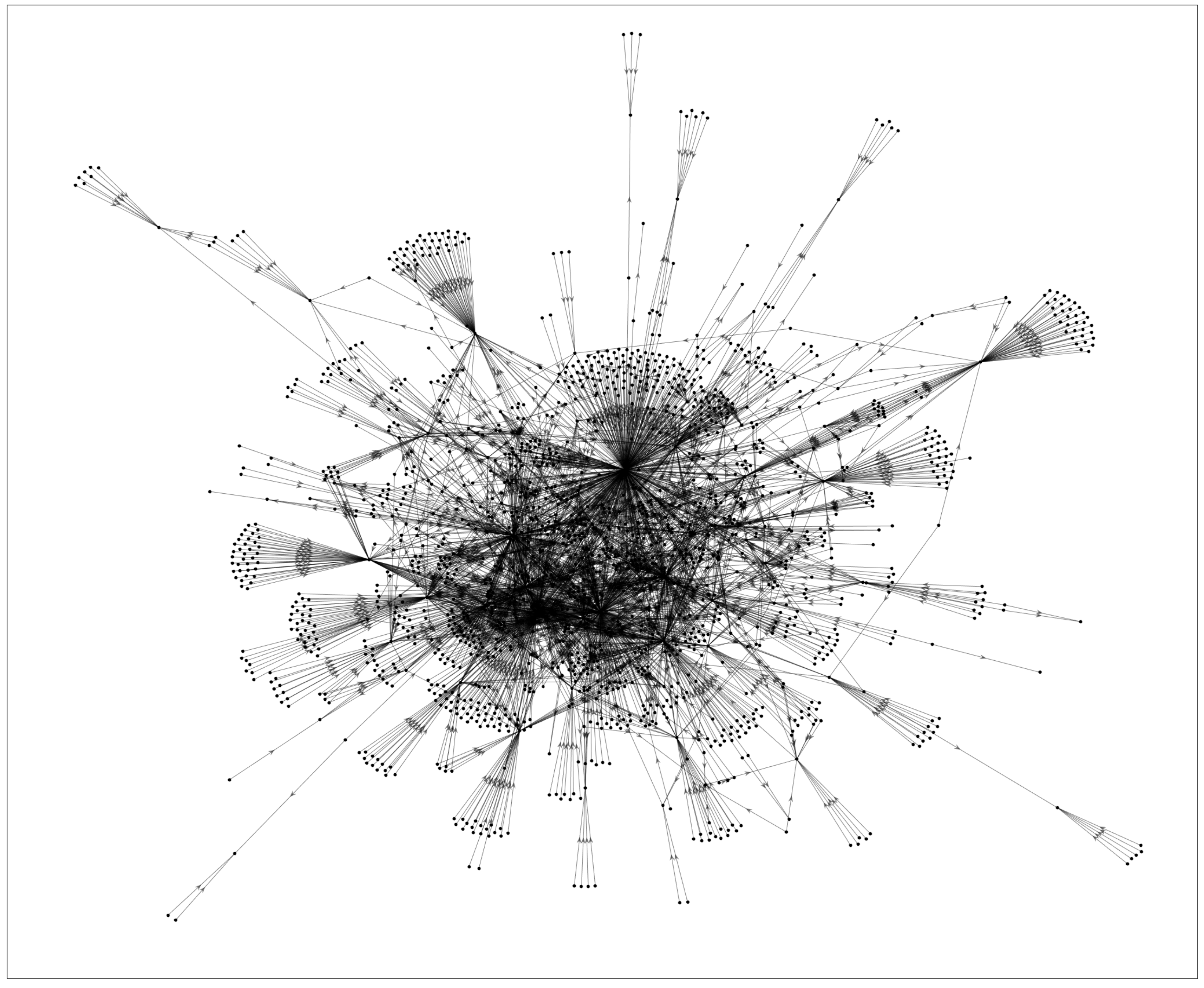}}
\caption{GRN for E-coli bacteria.}
\label{fig:EcoliGRN}
\end{center}
\vskip -0.34in
\end{figure}
The first problem has been addressed in the literature under different assumptions (see the Related Work subsection).
To the best of our knowledge, the second question has not been studied in the literature, and it is this question that we address herein.
Specifically, we consider a setup with $M$ experiments, each containing exactly one intervention.
The reason we consider single-intervention experiments is that in many applications, such as some experiments in biology, simultaneous intervention in multiple variables may not be feasible.
For the underlying structure on the variables, we assume that the number of cycles of length three in the structure is negligible compared to the order of the graph. Structures satisfying this assumption arise in many applications.
For instance, causal structure in gene regulatory network (GRN) for some bacteria such as the Escherichia coli (E-coli) and Saccharomyces cerevisiae (S. cerevisiae) has a tree like structure (see Figure \ref{fig:EcoliGRN}), and hence, satisfies our assumption \cite{cerulo2010learning}.

\textbf{Contributions.}
Unlike most of the previous work, we utilize a hybrid inference scheme instead of an adaptive approach (Subsection \ref{subsec:pre}).
In this approach, first an observational test, such as the IC algorithm \cite{pearl2009causality}, is performed on the set of variables. This test reveals the skeleton as well as the orientation of some of the edges of the causal graph. Next, based on the result of the initial test, the complete set of $M$ experiments is designed in a non-adaptive manner. Having the complete set of experiments, enables the experimenter to perform the interventional experiments in parallel.
The formal description of the problem of interest is provided in Subsection \ref{subsec:probdesc}.
We study the problem of structure learning for both Bayesian and minimax settings.
In Section \ref{sec:optsol}, we present the optimal solution for both settings. This solution is optimal in the sense of recovering the structure that minimizes the loss for a given number of interventional experiments. Finding this optimal solution is in general computationally intense. In Section \ref{sec:alg}, we propose ProBal algorithm, which finds the set of experiments in a computationally efficient manner. We show that for bounded degree graphs, in the minimax setting and in the Bayesian settings with uniform prior, our proposed algorithm is a $\rho$-approximation algorithm, where $\rho$ is independent of the order of the underlying graph.
In Section \ref{sec:expres}, using synthetic and real data, we show that the performance of ProBal is very close to the optimal solution.

\textbf{Related Work.}
%As mentioned earlier, there are two methods to distinguish the causal relationships among a set of variables.
The best known algorithms for general purely observational recovery approaches are IC \cite{judea1991equivalence} and PC \cite{spirtes1991algorithm}.
Such purely observational approaches reconstruct the causal graph up to Markov equivalence classes, and hence, the direction of some of the edges may remain unresolved.
Of course under some conditions, full causal structure learning using merely observational data is feasible \cite{shimizu2006linear, hoyer2009nonlinear, peters2012identifiability}.

There is a large body of research on learning causal structures using interventional data \cite{pearl2009causality, spirtes2000causation, woodward2005making, eberhardt2007causation, hauser2014two, cooper1999causal, he2008active}. Specifically,
Pearl \cite{pearl2009causality}, considers the SEM model and defines so-called do-intervention to infer the causal relations among a set of variables.
%One can decide whether the causal structure is identifiable in a systematic way using an algebraic method which is known as the do-calculus \cite{pearl1995causal}.
A similar approach for representing interventions is adopted in \cite{spirtes2000causation}, but it allows interventions to have non-degenerate distributions.
Woodward \cite{woodward2005making} proposed another type of intervention which unlike Pearl's, does not depend on a specified model of the causal relations among the random variables.
Peters et al. introduced invariant causal prediction \cite{peters2016causal}, which is a causal discovery method that uses different experimental settings to predict the set of ancestors of a variable.
In that work, data comes from different unknown experimental settings (which could results from interventions).
See \cite{meinshausen2016methods} for some validations of this method.

Regarding the first question discussed earlier, \cite{eberhardt2005number} consider the complete graph as the underlying causal structure to obtain the worst case bounds on the number of required experiments.
In that work, both cases of experiments containing bounded and unbounded number of interventions are studied.
In \cite{eberhardt2007causation,hyttinen2013experiment}, it has been shown that there is  a connection between the problem of finding a separating system in a graph and designing a proper set of experiments for causal inference, and hence, results from combinatorics help in finding the fundamental bounds.
%\todo[inline]{\cite{hauser2014two}}
In \cite{hauser2014two}, two algorithms that minimize the number of experiments in the worst case are developed. The proposed algorithms are adaptive and in the one with polynomial complexity, the size of experiments can be as large as half the order of the graph, which may not be practical in many real-life applications.
% hausser is adaptive?
In \cite{shanmugam2015learning}, the authors present information-theoretic lower bounds on the number of required experiments for both deterministic and randomized adaptive approaches. They also proposed an adaptive algorithm that allows to learn chordal graphs completely.
%Also, an algorithm for constructing a separating system and a deterministic and adaptive algorithm for fully learning chordal graphs are proposed in this work.
\vspace{-3mm}
\section{Model Description}
\label{sec:moddesc}

\subsection{Preliminaries}
\label{subsec:pre}
In this subsection we introduce some definitions and concepts that we require later.

\begin{definition}
Consider a directed graph $D=(V,E)$ with vertex set $V$ and set of directed edges $E$.
$D$ is a DAG if it is a finite graph with no directed cycles.
\end{definition}

\begin{definition}
A DAG $D$ is called causal if its vertices represent random variables $V=\{X_1, ...,X_n\}$ and a directed edges $(X_i,X_j)$ indicates that variable $X_i$ is a direct cause of variable $X_j$.
\end{definition}
We consider a structural equation model \cite{pearl2009causality}, which is a collection of $n$ equations $X_i=f_i(PA_{X_i},N_i)$, $i=1,...,n$, where $PA_{X_i}$ denotes the parents of $X_i$ in $D$, and $N_i$'s are jointly independent noise variables. We assume here that in our network, all variables are observable.
Also, throughout the rest of the paper, we assume the faithfulness assumption on the probability distribution.

%\begin{definition}
%The skeleton of a graph $D$ over the set of variables $V$ is an undirected graph over $V$ that contains an edge $\{X,Y\}$ for every directed edge $(X,Y)$ in $D$.
%\end{definition}

%\begin{definition}
%A distribution $P$ over the set of variables $\{X_1, ...,X_n\}$ is faithful to the causal DAG $D$ if $D$ represents all the independency relations contained in $P$.
%\end{definition}

\begin{definition}
Two causal DAGs $D_1$ and $D_2$ over $V$ are Markov equivalent if every  distribution that is compatible with one of the graphs is also compatible with the other. The set of all graphs over $V$ is partitioned into a set of mutually exclusive and exhaustive Markov equivalence classes, which are the set of equivalence classes induced by the Markov equivalence relation \cite{koller2009probabilistic}.
\end{definition}

\begin{definition}
A $v$-structure is a structure containing two converging directed edges whose tails are not connected by an edge. $v$-structures are also known as immorality and a graph with no immorality is called a moral graph.
\end{definition}

%In order to identify a causal DAG, one can utilize the sample values obtained either from pure observation or from interventional experiments.
%The former involves passive observation of random samples of the set of random variables (referred to which as the null experiment by \cite{eberhardt2005number}), while in the latter, the experimenter can actively intervene in the system and randomize the value of some variables.
%It is well known that using only data from an observational study, one can distinguish the causal relationships up to Markov equivalence.
Using purely observational data  (referred to which as the null experiment by \cite{eberhardt2005number}), one can utilize  a ``complete" conditional independence based algorithm to learn the causal structure as much as possible. By complete we mean that the algorithm is capable of distinguishing all the orientations up to the Markov equivalence class. Such an algorithm includes performing a conditional independence test followed by applying the Meek rules \cite{pearl2009causality}.
%The structure obtained from these algorithms has the property that all the $v$-structures in the graph, as well as the direction of edges which prevent extra $v$-structures or directed cycles are recovered.
On the other hand, interventions can enable us to differentiate among the different causal structures inside a Markov equivalence class. Define an intervention $I$ on variable $X\in V(D)$ as removing the influence of all the variables on $X$ and randomizing the value of this variable. We denote this intervention by $I=X$. An inference algorithm consists of a set of $M$ experiments\footnote{Note  that in most of the other work in this area, each intervention is what we refer to as experiment here and hence, each intervention can contain as many as $n$ variable randomization.} $\mathcal{E}_{total}=\{\mathcal{E}_1, \mathcal{E}_2, ..., \mathcal{E}_M\}$, where each experiment contains $k$ interventions, i.e., $\mathcal{E}_i=\{I_1^{(i)}, I_2^{(i)}, ..., I_k^{(i)}\}$ for $1\le i \le M$. As shown in \cite{eberhardt2007causation}, observing the result of the null experiment,  one can find the orientation of the edge between any two variables $X_i$ and $X_j$, if there exists $\mathcal{E}_k\in\mathcal{E}_{total}$ such that
$
(X_i\in\mathcal{E}_k\textit{ , }X_j\notin\mathcal{E}_k)\text{ or }
(X_j\in\mathcal{E}_k\textit{ , }X_i\notin\mathcal{E}_k).
$

An inference algorithm may be (1) adaptive, in which case the experiments are performed sequentially and the information obtained from the previous experiments is used to design the next one; (2) passive, in which all the experiments are designed beforehand;
(3) hybrid, in which the experimenter first performs a pure observational study to obtain the skeleton and some of the orientations in the causal DAG, and then designs the rest of the experiments in a passive manner.
The third approach is referred to as the passive setup by \cite{shanmugam2015learning}, while \cite{eberhardt2005number} use the term passive for a setting in which the interventions are selected without performing the null experiment.

\vspace{-0mm}
\subsection{Problem Definition}
\label{subsec:probdesc}

Since in many practical applications, it is resource-intensive to perform the experiments sequentially, we will investigate the hybrid approach for the design of experiments. 
A nice feature of the hybrid approach is that it allows us to parallelize the performance of the experiments.
%This is due to the fact that one can perform all the experiments in a hybrid approach in parallel on separate setups which have the same distribution. 
For example, in the study of GRNs introduced in Section \ref{sec:intro}, the GRN of all E-coli cells are the same and experiments can be performed on different cells simultaneously. 
%As mentioned earlier, in this approach, first a purely observational test will be performed on the system. The statistical data obtained form this step will allow us to obtain the full skeleton and find the orientation of some of the edges up to Markov equivalence using a complete conditional independence based algorithm.

\begin{definition}
A chord of a cycle is an edge not in the cycle whose endpoints are in the cycle. A hole in a graph is a cycle of length at least 4 having no chord. A graph is chordal if it has no hole.
\end{definition}

\begin{figure}[t]
\vskip 0.1in
\begin{center}
\centerline{\includegraphics[scale=0.62]{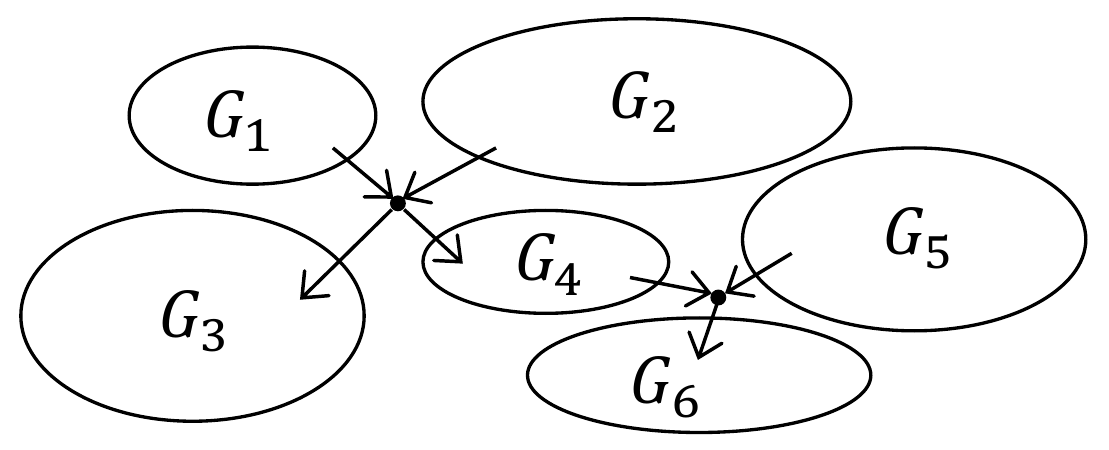}}
\caption{Example of a DAG, in which the procedure results in six disjoint chordal moral graphs. }
\label{fig:SampleD}
\end{center}
\vskip -0.2in
\end{figure}

Let $Z$ be the set of edges whose orientations are identified after the obseravtional test. Consider the moral graph $D\backslash Z$. As noted in \cite{shanmugam2015learning}, $D\backslash Z$ consists of a set of disjoint chordal moral graphs $\{G_1, G_2, ..., G_K\}$ (see Figure \ref{fig:SampleD} as an example). To learn the structure, it suffices to learn $\{G_1, G_2, ..., G_K\}$. We focus on one such graph, say $G$. Hence, in the remainder of the paper, $G$ is assumed to be a moral chordal DAG.

\begin{definition}
Let $S$ be the set of all edges in a chordal graph which do not belong to any triangle.
We refer to a connected component of $G\backslash S$ of order larger than one as ``cyst''.
\end{definition}

\begin{assumption}
\label{ass:1}
We assume that the number of triangles, compared to the order of the chordal graph is negligible. Formally, we assume that the proportion of the number of triangles to the order of $G$ goes to zero as $n=|V(G)|$ tends to infinity.
\end{assumption}

Assumption \ref{ass:1} implies that both the number of cysts and their orders are negligible.
%We refers to a structure with negligible number of triangles, as a semi-triangle-free DAG.
As mentioned in Section \ref{sec:intro}, structures with this property occur frequently in genomic and other applications.

As mentioned earlier, in some scenarios the experimenter may be restricted to perform a limited number of experiments.
Hence, we focus on the following problem: \textit{If the experimenter is allowed to perform $M$ experiments, each of size $k=1$. What portion of the graph could be reconstructed on average and in the worst case?}\\ We shall distribute our total budget of $M$ experiments over the components $\{G_1, G_2, ..., G_K\}$ proportionally to their size. Thus, we assume that we are capable of performing $m$ experiments $\mathcal{E}=\{\mathcal{E}_1, \mathcal{E}_2, ..., \mathcal{E}_m\}\subseteq\mathcal{E}_{total}$ on component $G$.

\begin{figure}[t]
\vskip 0.1in
\begin{center}
\centerline{\includegraphics[scale=0.38]{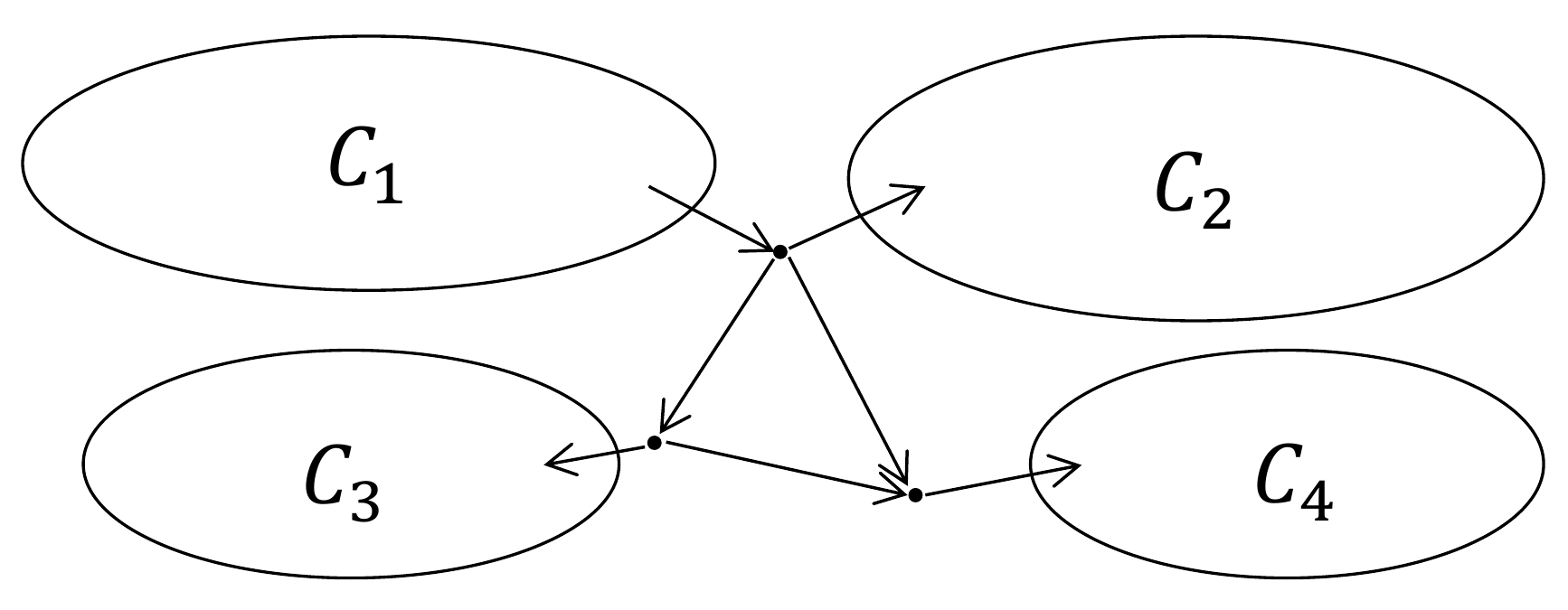}}
\caption{Example of a cyst in which intervening on all vertices is necessary. In this figure, the edges in component $C_2$ will not be identified unless we intervene on the top vertex.}
\label{fig:worstcyst}
\end{center}
\vskip -0.2in
\end{figure}

Consider the graph obtained by contracting all the vertices of a cyst into a single vertex. Clearly, this merging graph would be a tree,
%We consider each cyst as vertex by virtue of contracting all the vetices of the cyst on one vertex. Clearly, the resulting graph will be a tree, 
and hence, we work on a moral tree structure after the contractions. If we are required to intervene on a vertex in the resulting tree that corresponds to a cyst, we will intervene on all the vertices of that cyst. This is in some cases necessary as shown in Figure \ref{fig:worstcyst}. Note that after intervening in all the nodes of a cyst, we recover all the orientations inside the cyst. We emphasize that some edge orientations remain unidentified in the cysts that we have not intervened on; however, due to Assumption \ref{ass:1}, the number of such edges will not scale with $n$. Also, due to Assumption \ref{ass:1}, we can count intervening on a contracted cyst as one intervention without impacting the scaling results.

\begin{definition}
Root variable is a variable for which the number of edges entering that variable is zero.
\end{definition}

\begin{lemma}
\label{lem:1root}
A moral chordal graph $G$ has only one root unless they all belong to the same cyst.
\end{lemma}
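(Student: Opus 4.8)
The plan is to argue by contradiction, extracting a forbidden $v$-structure whenever two roots fail to sit in a common cyst. Suppose $G$ has two distinct roots $r_1$ and $r_2$. First I would record the easy observation that $r_1$ and $r_2$ cannot be adjacent: the single edge between them would have to be oriented, and whichever way it points gives one of the two vertices positive in-degree, contradicting that both are roots. Hence any connection between them uses at least two edges.

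Next I would take a shortest path $P : r_1 = v_0, v_1, \dots, v_\ell = r_2$ in $G$, which is automatically an induced (chordless) path, so that $v_{i-1}$ and $v_{i+1}$ are non-adjacent for every interior index $i$. Since $r_1$ is a root the first edge is oriented outward, $v_0 \to v_1$, and since $r_2$ is a root the last edge is oriented $v_\ell \to v_{\ell-1}$; thus the orientation along $P$ must switch from ``forward'' to ``backward'' at some interior vertex $v_i$, producing a collider $v_{i-1} \to v_i \leftarrow v_{i+1}$. Because $v_{i-1}$ and $v_{i+1}$ are non-adjacent, this collider is exactly a $v$-structure, contradicting the morality of $G$. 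This is the core mechanism, and it already shows that two roots can coexist only if every such forced collider is ``absorbed'' into a triangle --- that is, only if the two tails of every forced collider are in fact adjacent, which on an induced path can happen solely inside the triangle-dense part of the graph.

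The remaining, and main, task is to upgrade this local statement to the global conclusion that all roots lie in one cyst. For this I would pass to the tree obtained by contracting each cyst to a single node (the paper already notes this contraction yields a tree, since edges of $S$ lie in no triangle and a connected triangle-free chordal graph is a tree). I would then show that a cyst containing a root must be a source of this contracted tree: any bridge edge of $S$ pointing into such a cyst would meet, at its cyst endpoint, an internal edge emanating from the root, and since a bridge edge lies in no triangle its outer endpoint is non-adjacent to that internal neighbor, again producing a $v$-structure. Applying the collider argument to the contracted tree (where it is clean, since a tree has no triangles in which colliders could hide) shows that the tree has a unique source node. Every root therefore lies in this single node, which is either an ordinary vertex --- giving a unique root --- or a single cyst, giving the stated exception.

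The step I expect to be the main obstacle is this second one: converting ``the forced collider must have adjacent tails'' into ``all roots occupy the same cyst.'' The delicate point is the bookkeeping of orientations inside a cyst, namely verifying that a root inside a cyst really does force every incident bridge edge of $S$ to point outward, so that its contracted node is genuinely a source, together with checking that the induced orientation of the contracted tree is itself moral so that the unique-source argument transfers to it verbatim.
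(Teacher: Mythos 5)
Your second paragraph is already a complete proof of the lemma --- in fact of a strictly stronger statement. Since a shortest path is induced, the tails $v_{i-1},v_{i+1}$ of the forced collider are \emph{never} adjacent, so the ``absorbed into a triangle'' escape you hedge about at the end of that paragraph cannot occur: the contradiction is unconditional, and a connected moral DAG has a unique root with no cyst exception whatsoever (the lemma's ``unless'' clause is vacuous). Consequently the entire second half of your proposal --- contracting cysts, showing a root-containing cyst is a source of the contracted tree, transferring morality to that tree --- is unnecessary, and the ``main obstacle'' you identify does not exist. This is a genuinely different route from the paper's. The paper assumes the two roots lie in separate cysts, follows \emph{directed} paths from $r_1$ and $r_2$ to a meeting vertex $v$, and inspects the parents $p_1,p_2$ of $v$ on those paths: non-adjacent tails give a $v$-structure, while adjacent tails place $\{p_1,v,p_2\}$ in a cyst, and iterating this absorption walks back along the path until the roots themselves land in that cyst, contradicting the separate-cysts hypothesis. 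Because the paper uses directed rather than induced paths, adjacent tails genuinely can arise there, which is exactly why its proof needs the cyst bookkeeping and why the lemma statement carries the caveat. Your induced-path argument sidesteps all of that, and it also avoids the paper's unjustified opening step that directed paths from the two roots must meet at a common vertex --- a claim that is false for general connected DAGs (e.g.\ a zigzag of colliders) and is never argued in the paper. What the paper's route buys is continuity with the cyst machinery used elsewhere in the section; what yours buys is brevity, rigor at the meeting-vertex step, and the sharper conclusion that the root is unique outright.
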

\begin{proof}
It suffices to prove that we cannot have two roots in two separate cysts. We prove this by contradiction.
Suppose roots $r_1$ and $r_2$ exist in two separate cysts. Since the component is assumed to be connected, there exists a vertex $v$ where a directed path from $r_1$ to $v$ meets a directed path from $r_2$ in $v$. Let the parents of $v$ on these paths  be $p_1$ and $p_2$, respectively. If $p_1$ and $p_2$ are not adjacent, we will have a $v$-structure on the set $\{p_1,v,p_2\}$ which is a contradiction. Otherwise, $p_1$ and $p_2$ are adjacent and $\{p_1,v,p_2\}$ are in a cyst. Without loss of generality, assume $p_1$ is a parent of $p_2$. Let $p_3$ be the parent of $p_2$ on the path from $r_2$. In this case, we will have the same argument for the set  $\{p_1,p_2,p_3\}$, and we either have a $v$-structure, or $p_3$ is also in the cyst containing $v$. Continuing this argument, we either have a $v$-structure or $r_1$ and $r_2$ are in the same cyst as $v$, which is a contradiction.
\end{proof}

\begin{lemma}
\label{lem:root}
In a moral chordal graph, randomizing the root suffices to fully learn the orientations of all the edges not belonging to any cysts.
\end{lemma}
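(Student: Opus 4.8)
The plan is to reduce the statement to a claim about the tree obtained by contracting the cysts, and then to show that intervening on the root, followed by the Meek rules, orients every edge of that tree. Recall from the discussion preceding the lemma that contracting each cyst of $G$ to a single vertex produces a moral tree $T$, and that the edges of $T$ correspond to the edges of $G$ lying in no triangle, i.e.\ exactly the edges not belonging to any cyst. Thus it suffices to show that randomizing the root orients every edge of $T$. The root of $G$ corresponds to a vertex $r$ of $T$.

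First I would record the structure of $T$. Because $T$ is a tree it contains no triangle, so any two distinct neighbors of a common vertex are non-adjacent. Consequently $T$ can contain no immorality $a\to b\leftarrow c$ with $a,c$ non-adjacent without violating morality, so every vertex of $T$ has in-degree at most one. A connected DAG on $n$ vertices with $n-1$ edges in which every in-degree is at most one is an arborescence, and it has a unique source. Since this source has in-degree zero in $T$, the cyst (or singleton) it represents must itself contain an in-degree-zero vertex of $G$, i.e.\ a root; by Lemma~\ref{lem:1root} the root is unique up to a single cyst, which forces the source to be $r$. Hence in the true graph every edge of $T$ is directed away from $r$.

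Next I would carry out the recovery. Randomizing the root (intervening on all vertices of the corresponding cyst when $r$ is a contracted cyst) orients, by the edge-orientation criterion recalled in Subsection~\ref{subsec:pre}, every edge of $T$ incident to $r$; since $r$ is the source of $T$, these are all outgoing. I would then propagate orientations using Meek's rule R1, by induction on the distance from $r$ in $T$. For the inductive step, suppose an edge $a\to b$ has been oriented and $b-c$ is still undirected with $c\neq a$; because $T$ is triangle-free, $a$ and $c$ are non-adjacent, so R1 forces $b\to c$ (the reverse would create the immorality $a\to b\leftarrow c$). Processing the vertices of $T$ in breadth-first order from $r$, each vertex is reached through an already-oriented incoming edge, and R1 then orients all of its remaining incident edges; hence every edge of $T$ becomes oriented.

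The step I would be most careful about is the justification that R1 fires at every branching vertex. This is exactly where triangle-freeness of the contracted tree is essential: it supplies the non-adjacency hypothesis of R1, and it is also what makes the in-degree-at-most-one (hence arborescence) argument go through. I would also double-check the boundary case where $r$ is itself a contracted cyst, verifying that simultaneously intervening on all of its vertices orients every $T$-edge leaving the cyst; this holds because any such edge has one endpoint inside and one outside the intervened set, so the orientation criterion applies. Combining the arborescence structure with the R1 cascade then shows that all non-cyst edges are identified from the single root intervention.
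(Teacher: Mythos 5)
Your structural half is sound, and it is essentially a repackaging of the paper's own argument: the paper proves directly in $G$ that every non-cyst edge must point away from the root, because directed paths from the unique root (Lemma~\ref{lem:1root}) to both endpoints force the orientation; you prove the equivalent fact that the contracted tree $T$ is an arborescence whose source is $r$. One caveat: your in-degree-at-most-one step is inherited from the paper's \emph{asserted} (not proven) claim that the contraction yields a moral tree, which in a tree is literally the same statement as ``in-degree at most one''; the paper's proof of the lemma does not lean on that assertion, extracting the forcing from Lemma~\ref{lem:1root} alone.

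The genuine gap is in your recovery cascade at contracted cysts. Meek's rule R1 applies to the actual graph $G$ (its CPDAG), not to the contracted object $T$. When the middle vertex $b$ of your inductive step is a contracted cyst $K$, the entering arc $a\to b$ and an exiting edge $b-c$ correspond in $G$ to $\alpha\to\beta$ and $\beta'-\gamma$ with $\beta,\beta'\in K$ possibly \emph{distinct}; reversing the exit edge yields $\alpha\to\beta$ together with $\gamma\to\beta'$, which share no head and therefore create no immorality, so your justification (``the reverse would create the immorality $a\to b\leftarrow c$'') simply does not apply. You flagged the case where the root is a cyst, but not this one, and this is exactly the case the lemma exists for: if $G$ had no cysts the qualifier in the statement would be vacuous. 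The gap is repairable in two ways. (i) Run your BFS-plus-R1 induction inside $G$ rather than on $T$: starting from the entry vertex $\beta$, propagate layer by layer within $K$, noting that vertices whose BFS layers differ by two are non-adjacent, so R1 fires at every layer; then every exit vertex $\beta'$ acquires an oriented incoming edge whose tail is non-adjacent to $\gamma$ (any such adjacency would create a multi-edge or cycle in the tree $T$), and one more application of R1 orients $\beta'\to\gamma$. (ii) Argue as the paper does: an edge of a chordal graph lying in no triangle is a bridge, and orienting a bridge toward the root's side leaves the far side unreachable from the unique root, contradicting Lemma~\ref{lem:1root}. Either repair completes your proof; as written, the cascade is only valid when every vertex of $T$ is a singleton of $G$.
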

\begin{proof}
Let $r$ be the root vertex. Consider an edge $e=(u,v)$.
Since $e$ does not belong to any directed cycle, and there is a directed path from $r$ to $u$ and also a directed path from $r$ to $v$, $e$ should be directed from $u$ to $v$, otherwise it contradicts Lemma \ref{lem:1root}.
\end{proof}

Therefore, in the sequel, we focus on a moral tree structure and from Lemma \ref{lem:root}, we know that randomizing the root variable will give us all the orientations.

We model this problem as follows. Let $\mathcal{P}=\{P^{\theta}:\theta\in\Theta\}$ be the set of probability distributions over the location of the root of the tree. We assume that the probability distributions of interest are all positive.

The following remark is a consequence of Lemma \ref{lem:root}.
\begin{remark}
\label{rmk:root}
For each location of the root, only one moral tree is consistent with the skeleton of the tree. Therefore, $\mathcal{P}$ is the set of probably distributions over the realizations of the moral tree.
\end{remark}

Define $U(\mathcal{E})$ as the set of edges whose orientation is not found after performing the experiment set $\mathcal{E}$. For the given skeleton obtained from the observational test, let $T_v$ be the moral tree of order $n$ with vertex $v$ as its root.
We define the loss of an experiment set $\mathcal{E}$ on $T_v$ as
$l(\mathcal{E},T_v)=|U(\mathcal{E})|$,
and the average loss of the experiment set under distribution $P^{\theta}$ as
\[
L_{\theta}(\mathcal{E})=\sum_{v}P^{\theta}(v)l(\mathcal{E},T_v).
\]

The problem of finding the best experiment set for the worst case, could be stated as the following minimax problem:
\begin{equation}
\begin{aligned}
&\min_{\mathcal{E}}\max_{\theta}L_{\theta}(\mathcal{E}),\\
s.t.~|\mathcal{E}|=m,
&\hspace{3mm}|\mathcal{E}_i|=1~~~\forall i:~1\le i\le m
\end{aligned}
\end{equation}
In some real life applications, the experimenter may have prior knowledge about the possible location of the root in the tree. That is, the probability distribution $P^{\theta}\in\mathcal{P}$ over the location of the root in the tree is known.  
%Hence, there is no need to design the algorithm based on the worst distribution. 
In such a setting, we can investigate the following Bayesian version of the problem:
%Defining $L(\mathcal{E})=\sum_{v\in V(T)}P_r(v)l(\mathcal{E},T_v)$, we consider the following optimization problem:
\vspace{-2mm}
\begin{equation}
\begin{aligned}
&\min_{\mathcal{E}}L_{\theta}(\mathcal{E}),\\
s.t.~|\mathcal{E}|=m,
&\hspace{3mm}|\mathcal{E}_i|=1~~~\forall i:~1\le i\le m
\end{aligned}
\end{equation}

\section{Optimal Solution}
\label{sec:optsol}

Consider a moral tree $T$ of order $n$.
%As mentioned in Section \ref{sec:moddesc},
In this structure, every non-root vertex has incoming degree $d^-(v)=1$,
the root vertex, $r$ has incoming degree $d^-(r)=0$, and recall that intervening on the root identifies the whole tree.

Assume the experiment set $\mathcal{E}=\{\{I^{(1)}_1\},...,\{I^{(m)}_1\}\}$ was performed on the moral tree $T$. Let $\mathcal{I}=\{I^{(1)}_1,...,I^{(m)}_1\}$ be the set of variables on which we intervened. In this case, the subgraph $T\backslash\mathcal{I}$ will be a forest containing $J$ components $\{C_1,...,C_J\}$.

\begin{lemma}
\label{lem:loss}
Performing experiment set $\mathcal{E}$, on $T_v$ rooted at $v$,
\begin{equation*}
l(\mathcal{E},T_v) = \left\{
\begin{array}{l l}
0 & \quad v\in\mathcal{I},\\
|C_j|-|B_j| & \quad v\in C_j,
\end{array} \right.
\end{equation*}
where $C_j\in \{C_1,...,C_J\}$, and $B_j=C_j\cap N(\mathcal{I})$, where, $N(\mathcal{I})$ denotes the set of neighbors of variables in $\mathcal{I}$.
\end{lemma}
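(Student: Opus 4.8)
The plan is to evaluate $l(\mathcal{E},T_v)=|U(\mathcal{E})|$ by tracking exactly which edges of $T_v$ the experiment set resolves, in the two cases of the statement. If $v\in\mathcal{I}$, the true root is itself intervened upon, so Lemma~\ref{lem:root} (randomizing the root of a moral tree, here with no cysts, orients every edge) gives $U(\mathcal{E})=\emptyset$ and hence $l(\mathcal{E},T_v)=0$. The content is the case $v\in C_j$, where I would show that the unresolved edges all lie inside $C_j$ and number $|C_j|-|B_j|$.

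Assume $v\in C_j$. First, any edge with an endpoint in $\mathcal{I}$ is oriented directly by the interventional rule recalled in Subsection~\ref{subsec:pre}, because some singleton experiment contains exactly one of its endpoints. Second, I would show that every edge interior to a component $C_i$ with $i\neq j$ is resolved as well: the vertex of $C_i$ closest to $v$ has its parent in $\mathcal{I}$, so the edge entering $C_i$ is oriented pointing into $C_i$, and Meek's first rule then orients the edge to each child, and recursively all of $C_i$. Because $T_v$ is a tree it contains no triangle and no pair of vertices joined by two paths, so the first Meek rule is the only one that can ever fire; this keeps the propagation argument self-contained. It follows that $U(\mathcal{E})$ consists solely of edges internal to $C_j$, so $l(\mathcal{E},T_v)$ is the number of such edges left unoriented.

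It then remains to count the unoriented edges inside $C_j$, of which there are $|C_j|-1$ in total. The boundary set $B_j=C_j\cap N(\mathcal{I})$ consists of the vertices of $C_j$ incident to some edge going into $\mathcal{I}$; note each such edge points out of $C_j$, since the in-neighbour of any $C_j$-vertex other than $v$ again lies in $C_j$. The target is to show that exactly $|B_j|-1$ of the internal edges of $C_j$ are forced and the remaining $|C_j|-|B_j|$ are not. As a tool I would pass to the consistent-root formulation of Remark~\ref{rmk:root}: a vertex $u$ is a root compatible with the observed outcomes iff $T_u$ orients every edge incident to $\mathcal{I}$ exactly as $T_v$ does, and an internal edge of $C_j$ is resolved iff all compatible roots lie on one of its sides. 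Characterizing this set of compatible roots inside $C_j$ and reading off the straddled edges is how I would obtain the count.

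The main obstacle is exactly this count inside $C_j$: one has to pin down precisely which internal edges are forced, equivalently the exact set of compatible roots, and confirm that the number left unresolved is $|C_j|-|B_j|$ rather than merely a bound. I expect the crux to be analysing how the outward orientations at the boundary $B_j$ constrain the admissible position of the root within $C_j$, together with checking the degenerate configurations $|C_j|=1$ and $B_j\subseteq\{v\}$. I would also verify that the three groups of edges in the second step are exhaustive and mutually disjoint, in particular when several components attach to a common intervened vertex.
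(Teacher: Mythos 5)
The parts of your argument that you actually carry out are sound, and they are in places more careful than the paper's own proof: the case $v\in\mathcal{I}$, the orientation of every edge incident to $\mathcal{I}$, and the Meek-rule propagation showing that every component $C_i$ with $i\neq j$ is fully resolved (each such component is entered by an oriented edge from $\mathcal{I}$, and rule 1 cascades down it). But the proposal stops exactly where the lemma's specific content lies: the claim that precisely $|B_j|-1$ internal edges of $C_j$ get forced, leaving $|C_j|-|B_j|$ unresolved, is never proved --- you label it ``the target'' and ``the main obstacle.'' What you have actually established is only that $U(\mathcal{E})$ is contained in the set of internal edges of $C_j$, i.e., the bound $l(\mathcal{E},T_v)\le|C_j|-1$. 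That is a genuine gap, not a routine verification left to the reader.

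Moreover, the plan cannot be completed as outlined, and your own observations show why. Run your compatible-root criterion to the end: for any $u\in C_j$, the tree $T_u$ orients every edge incident to $\mathcal{I}$ exactly as $T_v$ does (all boundary edges point out of $C_j$, as you note), and assigns every $X\in\mathcal{I}$ the same descendant set as $T_v$ does, since the component of $T\setminus\{X\}$ containing $u$ is the one containing all of $C_j$. Hence every vertex of $C_j$ is a compatible root, the outward boundary orientations place no constraint whatsoever on the root's location inside $C_j$, and every internal edge of $C_j$ has compatible roots on both of its sides (namely its two endpoints). The count your method produces is therefore $|C_j|-1$, which agrees with $|C_j|-|B_j|$ only when $|B_j|=1$. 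Concretely, for the path $X_1-a-v-b-X_2$ with $\mathcal{I}=\{X_1,X_2\}$ and true root $v$: the roots $a,v,b$ all remain compatible after both interventions, so both internal edges $a-v$ and $v-b$ stay unoriented, yet $|C_j|-|B_j|=3-2=1$. Note that the paper's proof takes an entirely different route: it computes the single-intervention loss $l(X,T_v)=|\mathit{ND}_v(X)|-1_{\{X\neq v\}}$ via non-descendant sets and then passes to the set $\mathcal{I}$ through the identity $l(\mathcal{E},T_v)=|\bigcap_{X\in\mathcal{I}}\mathit{ND}_v(X)|-|\bigcap_{X\in\mathcal{I}}\mathit{ND}_v(X)\cap N(\mathcal{I})|$; the subtraction of the $N(\mathcal{I})$ term there is precisely the assertion you were unable to justify, since the edges joining $B_j$ to $\mathcal{I}$ leave $C_j$ and so are not among the edges entering vertices of $C_j$ counted by the first term. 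So the obstacle you identified is real, and a complete writeup would have to resolve this discrepancy rather than defer it.
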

%Note that $1\le|B_i|\le m$.
\vspace{-5mm}
\begin{proof}
After performing a single intervention $I=X$, the orientation of all the edges entering the descendants of $X$ and the edges entering $X$ itself will be recovered. Thus
$
l(X,T_v)=|\textit{ND}_v(X)|-1_{\{X\neq v\}}.
$
As a result, intervening on variables in $\mathcal{I}$, if $v\in\mathcal{I}$, since $|\textit{ND}_v(v)|=0$ the loss would be equal to zero; otherwise, if $v\in C_j$ we will have:\\
$l(\mathcal{E},T_v)=|\bigcap_{X\in\mathcal{I}}\textit{ND}_v(X)
|-|\bigcap_{X\in\mathcal{I}}\textit{ND}_v(X)\cap N(\mathcal{I})|
$\\$=|C_j|-|B_j|$.
%\begin{align*}
%l(\mathcal{E},T_v)&=|\bigcap_{X\in\mathcal{I}}\textit{ND}_v(X)
%|-|\bigcap_{X\in\mathcal{I}}\textit{ND}_v(X)\cap N(\mathcal{I})|\\
%&=|C_j|-|B_j|.
%\end{align*}
\end{proof}

\begin{theorem}
\label{thm:loss}
%Let $U\in\mathcal{P}$ be the uniform distribution of the location of the root over vertices of $T$.
The average loss could be bounded as follows
\vspace{-5mm}
\begin{equation}
\label{eq:thm1}
\sum_{j=1}^JP^{\theta}(C_j)|C_j|-m\le L_{\theta}(\mathcal{E})\le\sum_{j=1}^JP^{\theta}(C_j)|C_j|-1,
\end{equation}
%where $\epsilon=m$.
\end{theorem}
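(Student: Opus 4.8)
The plan is to first put the average loss into closed form via Lemma~\ref{lem:loss} and then reduce the theorem to a two–sided bound on a single scalar. Writing $P^{\theta}(C_j)=\sum_{v\in C_j}P^{\theta}(v)$ for the root mass on component $C_j$, I would split the sum defining $L_{\theta}(\mathcal{E})$ according to whether the root $v$ lies in $\mathcal{I}$ or in one of the $C_j$. The intervened vertices contribute zero, and on each $C_j$ the loss $|C_j|-|B_j|$ does not depend on $v$, so it factors out:
\begin{equation*}
L_{\theta}(\mathcal{E})=\sum_{j=1}^J P^{\theta}(C_j)\,|C_j|-\sum_{j=1}^J P^{\theta}(C_j)\,|B_j|.
\end{equation*}
Since the first term is exactly the quantity appearing on both sides of \eqref{eq:thm1}, the theorem is equivalent to the claim $1\le\Sigma\le m$, where $\Sigma:=\sum_{j=1}^J P^{\theta}(C_j)\,|B_j|$.

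The lower side is the easy direction. Because $T$ is connected and $\mathcal{I}\neq\emptyset$, every component $C_j$ of $T\backslash\mathcal{I}$ contains at least one vertex adjacent to $\mathcal{I}$, so $|B_j|\ge1$ for all $j$. As the components partition the non-intervened vertices, $\sum_j P^{\theta}(C_j)=P^{\theta}(V\backslash\mathcal{I})$, whence $\Sigma\ge\sum_j P^{\theta}(C_j)$. This gives the upper bound $L_{\theta}(\mathcal{E})\le\sum_j P^{\theta}(C_j)|C_j|-1$ once one treats the root mass on the $m$ intervened vertices as negligible (equivalently, $P^{\theta}(V\backslash\mathcal{I})=1$); the delicate point here is precisely this normalization, whereas the combinatorics is trivial.

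For $\Sigma\le m$ I would re-index by boundary vertices rather than components. Let $\partial$ be the set of non-intervened vertices having a neighbor in $\mathcal{I}$; each $w\in\partial$ lies in a unique component $C_{j(w)}$ and is counted once in $|B_{j(w)}|$, so $\Sigma=\sum_{w\in\partial}P^{\theta}(C_{j(w)})$. The key structural fact is that if, for each $w\in\partial$, I pick one intervened neighbor $x(w)\in N(w)\cap\mathcal{I}$, then all of $C_{j(w)}$ lies in the subtree of $T$ hanging off $x(w)$ through $w$: any path inside $C_{j(w)}$ avoids $\mathcal{I}$, hence avoids $x(w)$, and so stays on the $w$–side of $x(w)$. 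Thus $P^{\theta}(C_{j(w)})$ is at most the mass of that hanging subtree. Grouping the boundary vertices by their chosen neighbor $x\in\mathcal{I}$, the subtrees hanging through distinct neighbors of $x$ are pairwise disjoint (they are distinct components of $T$ minus the single vertex $x$), so their masses sum to at most $P^{\theta}(V\backslash\{x\})\le1$. Summing over the $m$ vertices of $\mathcal{I}$ yields $\Sigma\le m$, i.e.\ $L_{\theta}(\mathcal{E})\ge\sum_j P^{\theta}(C_j)|C_j|-m$.

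The main obstacle is this $\Sigma\le m$ direction. The naive estimate $|B_j|\le(\text{edges from }C_j\text{ to }\mathcal{I})$ is far too weak, because a single intervened vertex can border arbitrarily many components; a star with intervened center already shows $\sum_j|B_j|$ can be of order $n$. The probability weighting is therefore essential, and the right way to exploit it is the containment–plus–disjointness argument above, which charges each component's mass to one intervened vertex so that the total charge at any intervention never exceeds $1$.
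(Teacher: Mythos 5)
Your proof is correct, and its first half---applying Lemma \ref{lem:loss}, splitting the sum over $v\in\mathcal{I}$ versus $v\in C_j$, and factoring to get $L_{\theta}(\mathcal{E})=\sum_{j}P^{\theta}(C_j)|C_j|-\Sigma$ with $\Sigma=\sum_{j}P^{\theta}(C_j)|B_j|$---is exactly the paper's derivation. Where you genuinely diverge is the direction $\Sigma\le m$. The paper disposes of both directions in one line by noting $1\le|B_j|\le m$ for every $j$: combined with $\sum_{j}P^{\theta}(C_j)\le 1$, the per-component bound $|B_j|\le m$ immediately gives $\Sigma\le m$. Your charging argument---rewriting $\Sigma$ as a sum over boundary vertices, containing each component in the subtree hanging off a chosen intervened neighbor, and using disjointness of these subtrees at each $x\in\mathcal{I}$---is a valid alternative, and it even yields the marginally sharper bound $\Sigma\le\sum_{x\in\mathcal{I}}\left(1-P^{\theta}(x)\right)$. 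However, your stated reason for needing it is mistaken: the ``naive'' estimate is not too weak. In a tree, each intervened vertex has at most one neighbor inside any single component $C_j$ (two such neighbors together with a path inside $C_j$ would close a cycle), so $|B_j|\le m$ holds per component; your star example only shows that the unweighted aggregate $\sum_j|B_j|$ can be of order $n$, which is irrelevant, because the weights $P^{\theta}(C_j)$ summing to at most $1$ are applied componentwise. So the simple route you dismissed is precisely the paper's proof. Finally, on the other direction you are right to flag the normalization: $|B_j|\ge 1$ only gives $\Sigma\ge 1-P^{\theta}(\mathcal{I})$, so the stated upper bound $\sum_j P^{\theta}(C_j)|C_j|-1$ holds only up to the (positive, but negligible when $m\ll n$) mass $P^{\theta}(\mathcal{I})$; the paper calls this step ``immediate'' and silently absorbs the same discrepancy, so your explicit caveat is if anything more careful than the original.
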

\vspace{-5mm}
\begin{proof}
Using Lemma \ref{lem:loss}, we have
\vspace{-1mm}
\begin{align*}
L_{\theta}&(\mathcal{E})=\sum_v P^{\theta}(v)l(\mathcal{E},T_v)\\
&=\sum_{v\in\mathcal{I}}P^{\theta}(v)\times0
+\sum_{j=1}^J\sum_{v\in C_j}P^{\theta}(v)(|C_j|-|B_j|)\\
&=\sum_{j=1}^J(|C_j|-|B_j|)\sum_{v\in C_j}P^{\theta}(v)\\
%&=(1+\frac{1}{n-1})\sum_{j=1}^J\frac{1}{n}(|C_j|-|B_j|)P_{\theta}(C_j)\\
%&=\sum_{j=1}^JP_{\theta}(C_j)U(C_j)-\sum_{j=1}^JP_{\theta}(C_j)U(B_j)\\
&=\sum_{j=1}^JP^{\theta}(C_j)|C_j|
-\sum_{j=1}^JP^{\theta}(C_j)|B_j|.
\end{align*}
Note that for all $j$, $1\le|B_j|\le m$, and the result is immediate.
%In the last expression, denoting the second, terms by $A$, we have $-{m}/{n}\le A\le-{1}/{n}$. This implies the desired result.
\end{proof}
\vspace{-3mm}
Since we have assumed that $m\ll n$, in the sequel we will focus on minimizing
\[
 \hat{L}_{\theta}(\mathcal{E})=\sum_{j=1}^JP^{\theta}(C_j)|C_j|.
\]
\begin{corollary}
\label{cor:unif}
In the special case of uniform $P^{\theta}$, the function $\hat{L}$ could be obtained as
$\hat{L}_{\theta}(\mathcal{E})=\frac{1}{n}\sum_{j=1}^J|C_j|^2$.
\end{corollary}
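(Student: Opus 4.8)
The plan is to obtain the claimed closed form by direct substitution of the uniform prior into the expression $\hat{L}_{\theta}(\mathcal{E})=\sum_{j=1}^J P^{\theta}(C_j)|C_j|$ established immediately above the corollary. First I would make precise what \emph{uniform $P^\theta$} means here: since $P^\theta$ is a distribution over the $n$ possible locations of the root among the vertices of the moral tree $T$, the uniform choice assigns $P^\theta(v)=1/n$ to every vertex $v\in V(T)$.

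Next I would compute the total mass that the uniform prior places on each component $C_j$ of the forest $T\setminus\mathcal{I}$. Because $P^\theta(C_j)$ is defined as $\sum_{v\in C_j}P^\theta(v)$ and each summand equals $1/n$, this immediately gives $P^\theta(C_j)=|C_j|/n$. Substituting this into $\sum_{j=1}^J P^\theta(C_j)|C_j|$ then yields $\frac{1}{n}\sum_{j=1}^J|C_j|^2$, which is exactly the asserted identity. The argument is thus a one-line computation once the prior is written out explicitly.

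The only point requiring any care—and hence the closest thing to an obstacle—is the normalization. One must keep in mind that the prior is supported on all $n$ vertices of the original tree, not merely on the $n-m$ vertices surviving in the forest $T\setminus\mathcal{I}$. The $m$ intervened vertices contribute zero loss (consistent with Lemma \ref{lem:loss}), so they drop out of the sum over components even though each still carries mass $1/n$; this is precisely why the leading constant is $1/n$ and not $1/(n-m)$.
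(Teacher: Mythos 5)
Your proof is correct and matches the paper's (implicit) argument: the corollary follows by the same one-line substitution $P^{\theta}(C_j)=|C_j|/n$ into $\hat{L}_{\theta}(\mathcal{E})=\sum_{j=1}^J P^{\theta}(C_j)|C_j|$, which is why the paper states it without proof. Your remark on the normalization—that the prior lives on all $n$ vertices while the $m$ intervened vertices contribute zero loss by Lemma \ref{lem:loss}, so the constant is $1/n$ rather than $1/(n-m)$—is a correct and worthwhile clarification.
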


%\subsection{Bayesian Approach}
%\label{subsec:optsolbayes}

\textbf{Bayesian Setting.} In the Bayesian setting, we seek the set of experiments that minimizes $\hat{L}_{\theta}$. This can be done by checking all $\binom{n}{m}$ possible vertex sets of size $m$. This brute-force solution is computationally intensive.
%For instance, if in a graph of order $n=100$, and $m=10$ experiments, $\binom{100}{10}\ge10^{10}$ options should be checked, which may not be feasible.
In Section \ref{sec:alg} we will introduce an efficient approximation algorithm instead.

%\subsection{Minimax Approach}
%\label{subsec:mM}

\textbf{Minimax Setting.} As mentioned in Section \ref{sec:moddesc}, in the minimax setting we are interested in finding optimal $\mathcal{E}$ in $\min_{\mathcal{E}}\max_{\theta}\hat{L}_{\theta}(\mathcal{E})$. Note that the loss is maximized if all the probability mass of root is put on the largest component. That is
\vspace{-4mm}
\begin{equation}
\label{eq:mMcomp}
\begin{aligned}
\min_{\mathcal{E}}\max_{\theta}\hat{L}_{\theta}(\mathcal{E})
&=\min_{\mathcal{E}}\max_{\theta}\sum_{j=1}^JP^{\theta}(C_j)|C_j|\\
&\le\min_{\mathcal{E}}\max_{j}|C_j|.
\end{aligned}
\vspace{-3mm}
\end{equation}

Therefore, it suffices to choose a variable set $\mathcal{I}=\{X_1,...,X_m\}$, which minimizes $\max_j |C_j|$.\\
We can again use a brute-force solution which suffers from the same computational complexity issues as in the Bayesian case. We will propose an approximation algorithm for the minimax approach in Section \ref{sec:alg}.

\section{Efficient Learning Algorithm}
\label{sec:alg}

%\subsection{Description}

In this section, we propose an algorithm for finding experiment sets efficiently for both Bayesian and minimax settings. First, we define the concept of a separator vertex, which plays a key role in the proposed algorithm.
We shall see that in our method, we require the prior on the location of the root variable.
Using separators, we propose the probability balancer (ProBal) algorithm, which allows for experiment design. The main idea is to iteratively decompose the tree into two subtrees, referred to as segments, sharing a separator vertex. In Subsection \ref{subsec:mmex}, we consider the minimax extension in which no prior is not available to the experimenter.
%For this case we propose a modified version of the ProBal algorithm.

\begin{definition}
Let lobes of a vertex $v$ in a tree be the remaining components in the graph after removing $v$.
A vertex $v$  in a tree $T$ is a separator if the probability of the root being in each of its lobes is less than $1/2$.
\end{definition}
\begin{proposition}
\label{prop:exist}
There exists a separator vertex in any tree.
\end{proposition}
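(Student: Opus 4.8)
The plan is to run a weighted centroid argument, treating the root distribution $P$ as a mass placed on the vertices of $T$. For a vertex $v$ let $\Phi(v)=\max_{L}P(L)$, where $L$ ranges over the lobes of $v$; by definition $v$ is a separator exactly when $\Phi(v)<1/2$. The first observation I would record uses positivity of $P$: the lobes of $v$ partition $V(T)\setminus\{v\}$, so their probabilities sum to $1-P(v)<1$, and hence at most one lobe of $v$ can carry probability at least $1/2$. When it exists, call this unique lobe the heavy lobe of $v$.

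Next I would choose a vertex $v^{\star}$ minimizing $\Phi$ and claim it is a separator. Suppose not, so $\Phi(v^{\star})\ge 1/2$ and $v^{\star}$ has a heavy lobe $L$; let $u$ be the unique neighbor of $v^{\star}$ inside $L$, so that $L$ is precisely the set of vertices on $u$'s side of the edge $\{v^{\star},u\}$. I would then compare $\Phi(u)$ with $\Phi(v^{\star})=P(L)$ by listing the lobes of $u$: one of them is $V(T)\setminus L$ (the side containing $v^{\star}$), with probability $1-P(L)$, while every other lobe of $u$ is contained in $L\setminus\{u\}$ and so has probability at most $P(L)-P(u)$. Hence $\Phi(u)\le\max\{\,1-P(L),\,P(L)-P(u)\,\}$. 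Since $P(u)>0$ we get $P(L)-P(u)<P(L)$, and since $P(L)\ge 1/2$ we get $1-P(L)\le 1/2\le P(L)$; as long as $P(L)>1/2$ both terms are strictly below $P(L)$, so $\Phi(u)<\Phi(v^{\star})$, contradicting minimality. This forces $v^{\star}$ to be a separator.

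The step I expect to be the main obstacle is the boundary case $P(L)=1/2$, where the comparison only yields $\Phi(u)\le\Phi(v^{\star})$ and the monovariant can stall. To see that this is the sole way the argument can break, I would orient each non-separator vertex toward its (unique) heavy lobe; if no separator existed, every vertex would have out-degree one, placing $n$ arrows on the $n-1$ edges of $T$, so by pigeonhole some edge $\{v^{\star},u\}$ would be oriented in both directions, meaning both sides $A$ (containing $v^{\star}$) and $B$ (containing $u$) of that edge have probability at least $1/2$; since $A$ and $B$ partition $V(T)$ this forces $P(A)=P(B)=1/2$. Thus a separator exists unless some edge splits the total mass into two equal halves. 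I would close by excluding this degenerate tie — either by appealing to the genericity of the positive prior (an exact $1/2$–$1/2$ split is a non-generic coincidence that can be removed by an arbitrarily small symmetric perturbation) or by a fixed tie-breaking convention — after which the strict descent above goes through and yields the separator. Making this tie-break clean is where I expect the real work to lie.
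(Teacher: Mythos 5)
Your argument is sound, and it takes a genuinely different (and more careful) route than the paper. The paper's proof is a local walk: start at any vertex, step toward the unique lobe of probability $\ge 1/2$, and argue the walk can never backtrack and must therefore terminate at a separator. You instead take a global minimizer $v^{\star}$ of the heaviest-lobe mass $\Phi$ and derive a contradiction from strict descent, then supplement this with a pigeonhole count of heavy-lobe orientations on the $n-1$ edges. Both proofs hinge on the same comparison (stepping into a lobe of mass strictly greater than $1/2$ strictly decreases $\Phi$), but your formulation isolates exactly where that comparison degenerates, which the paper's does not.

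More importantly, the boundary case you flag is not a loose end you failed to tie off; it is a genuine counterexample to the proposition as literally stated. If some edge splits the mass into two sides of probability exactly $1/2$ each --- e.g., a path on four vertices under the uniform prior, which is precisely the prior featured in the paper's main theorems --- then, as your pigeonhole argument shows, no vertex satisfies the strict definition of separator. The paper's own proof silently assumes the heavy lobe $b$ has $P^{\theta}(b)>1/2$: its assertion that ``the lobe connected to $v_2$ through $v_1$ should have probability less than $1/2$'' fails when $P^{\theta}(b)=1/2$, in which case the walk oscillates between $v_1$ and $v_2$ and never terminates. Where your proposal falls short is the repair: appealing to genericity or perturbation of the prior is not legitimate here, because the uniform prior is both central to the paper and exactly where ties occur. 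The clean fix is to weaken the definition of separator to ``each lobe has probability at most $1/2$.'' With that definition your extremal argument closes immediately (if $\Phi(v^{\star})>1/2$ the strict descent gives a contradiction, hence $\Phi(v^{\star})\le 1/2$), and the downstream use in Proposition 2 is unaffected, since its proof only needs the bound $p_{j+1}\le 1/2 < 2/3$.
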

\vspace{-5mm}
\begin{proof}
Consider any arbitrary vertex $v_1$ in the tree. If all its lobes have probability less than $1/2$, then it is a separator; otherwise, only one of its lobes, say lobe $b$, has probability larger than or equal to $1/2$. Consider $v_2$, which is the neighbor of $v_1$ in $b$. Since the probabilities should add up to 1, the lobe connected to $v_2$ through $v_1$ should have probability less than $1/2$ and hence we continue with checking the probability of the other lobes of $v_2$. This process will result in finding a separator because there are no cycles in a tree and we have assumed that the tree is finite.
\end{proof}

\begin{proposition}
\label{prop:2/3}
The lobes of a separator vertex can be partitioned into two wings such that the probability of the root being in each of them is less than $2/3$.
\end{proposition}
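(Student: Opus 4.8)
The plan is to exhibit the two wings explicitly by a greedy balancing construction and then bound each wing's probability using the only two facts at our disposal: the separator property (every lobe carries root-probability strictly below $1/2$) and the positivity assumption (so the separator vertex itself has strictly positive root-probability, forcing the lobe probabilities to sum to strictly less than $1$).

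Concretely, I would let $v$ be the separator and let $L_1,\dots,L_d$ be its lobes, ordered so that their root-probabilities satisfy $p_1\ge p_2\ge\cdots\ge p_d$, where $p_i$ is the probability that the root lies in $L_i$, and write $S=\sum_{i=1}^d p_i$. Two observations drive everything: the separator definition gives $p_i<1/2$ for every $i$, and since the distribution is positive we have $P(\mathrm{root}=v)>0$, so from $S=1-P(\mathrm{root}=v)$ we get the strict inequality $S<1$. I would then build the first wing $W_1$ greedily by adding lobes in the order $L_1,L_2,\dots$ and stopping as soon as the accumulated probability first reaches $1/3$ (or once the lobes run out), letting $W_2$ collect the rest.

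To bound $P(W_1)$ I split on how many lobes it received. If $W_1$ is the single lobe $L_1$, then $P(W_1)=p_1<1/2<2/3$ directly from the separator bound. If instead $W_1$ received $k\ge 2$ lobes, let $L_k$ be the last one added; just before adding it the partial sum was $t=p_1+\cdots+p_{k-1}<1/3$ (that is precisely why the process had not stopped), and the decreasing order forces $p_k\le p_{k-1}\le t<1/3$, whence $P(W_1)=t+p_k<1/3+1/3=2/3$. For $W_2$, either the threshold was reached, so $P(W_1)\ge 1/3$ and therefore $P(W_2)=S-P(W_1)\le S-1/3<1-1/3=2/3$ using $S<1$; or all lobes landed in $W_1$, in which case $W_2$ is empty and $P(W_2)=0$. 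In every case both wings have root-probability strictly below $2/3$.

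I expect the delicate point to be the wing containing the largest lobe: a naive ``stop just past $1/3$'' bound would only give $1/3+1/2$, which is useless, so the argument must process the lobes in decreasing order and treat the single-large-lobe case separately via $p_1<1/2$. Equally, the strict inequality on $W_2$ is exactly where positivity enters through $S<1$; without it one can only conclude $\le 2/3$ (for instance three lobes of probability $1/3$ each with no mass at $v$), so I would be careful to invoke the positivity assumption at that step to secure the strict bound claimed in the statement.
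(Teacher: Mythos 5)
Your proof is correct, and it shares the paper's basic strategy---sort the lobes, grow one wing greedily past a threshold, and control both wings using exactly the two facts you isolate (each lobe has mass $<1/2$ by the separator property, and the lobes' total mass $S$ is bounded by the total probability)---but the execution differs in ways worth recording. The paper sorts the lobes in \emph{ascending} order, takes as one wing the maximal prefix of mass at most $2/3$, and argues in three cases; in its hard case it shows the first excluded lobe has mass strictly between $1/3$ and $1/2$, makes that lobe a singleton wing, and bounds the remainder by $S-1/3\le 1-1/3=2/3$. You instead sort in \emph{descending} order, stop at the minimal prefix of mass at least $1/3$, and handle the large-lobe case via $p_1<1/2$; your observation that descending order is what keeps the last added lobe small is exactly the right delicate point. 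The substantive difference is strictness: in two of the paper's three cases the argument yields only wings of mass $\le 2/3$ (a prefix can sit exactly at $2/3$), so the paper's proof as written establishes the non-strict version of the proposition, whereas your invocation of positivity, giving $S<1$, is precisely what upgrades the bound on $W_2$ to the strict inequality claimed in the statement; your example of three lobes of mass $1/3$ each with no mass at $v$ correctly shows strictness is unattainable without positivity. One caveat: inside ProBal, Div$(\cdot)$ zeroes out the probability of previously chosen separators, so when the proposition is invoked on later segments the renormalized distribution need not be positive; in that regime both your argument and the paper's give only $\le 2/3$, which is all that the proof of Theorem \ref{thm:UP1} actually uses.
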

\vspace{-5mm}
\begin{proof}
Suppose $v$ is a separator vertex with lobes $b_1,\cdots,b_l$ sorted in ascending order of their probabilities  $P^{\theta}(b_i)=p_i$. We also add lobe $b_0$ with $p_0=0$.
Let $j$ be the largest index such that $\sum_{i=0}^jp_i\le2/3$.
If $j=l$, any arbitrary partitioning of the lobes is acceptable. Assume $j<l$.
 If $\sum_{i=j+1}^lp_i\le2/3$, then $\{\{b_0,\cdots,b_j\},\{b_{j+1},\cdots,b_l\}\}$ is the desired partitioning. Otherwise, we have $\sum_{i=0}^{j+1}p_i>2/3$ and $\sum_{i=j+1}^lp_i>2/3$, which implies that $p_{j+1}>1/3$ and since $v$ is a separator, $p_{j+1}<1/2$. Therefore, $P^{\theta}(\{b_0,\cdots b_{l}\}\backslash \{b_{j+1}\})<2/3$, and $\{\{b_{j+1}\},\{\{b_0,\cdots b_{l}\}\backslash \{b_{j+1}\}\}\}$ is the desired partitioning.
\end{proof}
\vspace{-3mm}

ProBal algorithm searches for $m$ variables in a given tree $T$ in an iterative manner.
It starts with the original tree as the initial segment and in each round it breaks it into smaller segments in the following manner. Let $\mathcal{G}$ be the set containing all the segments.
At each round the algorithm picks the segment $G_m$ with largest $P^{\theta}(G)$ in the set $\mathcal{G}$, and finds the most suitable separator (described below) and adds it to the intervention set $\mathcal{I}$ (if it is not already in $\mathcal{I}$).
This is done using the function FindSep.
 Then using the function Div, the algorithm divides $G_m$ into two new segments $G_1$ and $G_2$, and replaces $G_m$ with $\{G_1,G_2\}$ in the set $\mathcal{G}$ unless they have a star structure with the used separator as the center vertex. The reason that we ignore the star structures is that since the center is already chosen to be intervened on, the orientation of all the edges of the structure are discovered and there is no need for further interventions on the other variables in the star. The process of choosing separators continues until $m$ variables are collected or all the graph is resolved. The set $\mathcal{I}$ will be returned as the set of intervention variables.
The functions FindSep$(\cdot)$ and Div$(\cdot)$ are described below.
\vspace{-3mm}
 \begin{itemize}
\item In the function FindSep$(\cdot)$, first we normalize the values of probability in the input segment to obtain distribution $\overline{P^{\theta}}$. For any variable $X\in G_m$, we compute the probability of root being in the lobes of $X$, and partition the lobes into two wings $W_1^*(X)$ and $W_2^*(X)$ such that the probability of the root being in the wings is as balanced as possible. Define the unbalancedness of $X$ as $s(X)\coloneqq |\overline{P^{\theta}}(W_1^*(X))-1/2\overline{P^{\theta}}(W_1^*(X)\cup W_2^*(X))|$. The function returns variables $X^*$ with minimum $s(X)$.
\item The function Div$(\cdot)$ outputs segments $G_1=G_m\backslash W_2^*(X^*)$ and $G_2=G_m\backslash W_1^*(X^*)$. In both segments, it sets the probability of $X^*$ to zero.
\end{itemize}
\vspace{-3mm}
\begin{algorithm}[t]
\begin{algorithmic}
 \STATE {\bf Input:} $T$, $m$.
\STATE $\mathcal{I}=\emptyset$, $\mathcal{G}=\{T\}$.
\WHILE{$|\mathcal{I}|<m$ \AND $\mathcal{G}\neq\emptyset$}
\STATE $G_m=\arg\max_{G\in\mathcal{G}}P^{\theta}(G)$.
\STATE $X^*=\text{FindSep}(G_m)$.
\STATE $\mathcal{I}=\mathcal{I}\cup X^*$.
\STATE $(G_1,G_2)=\text{Div}(G_m,X^*)$.
\STATE $G_{new}=\{G_1,G_2\}$.
\STATE For $i\in\{1,2\}$, if $G_i$ is a star graph with center vertex  $X^*$, or if $V(G_i)\subseteq \mathcal{I}$, remove $G_i$ from $G_{new}$.
\STATE $\mathcal{G}=\mathcal{G}\backslash G_m$.
\STATE $\mathcal{G}=\mathcal{G}\cup G_{new}$.
 \ENDWHILE
\STATE {\bf Output:} $\mathcal{I}$.
 \caption{ProBal Algorithm}
 \label{alg:ProBal}
\end{algorithmic}
\end{algorithm}

\begin{lemma}
\label{lem:alg}
(a) A leaf will not be chosen as the separator more than once.\\
(b) If the chosen separator $X^*$ in $G_m$ is not a leaf, and the segments $G_1$ and $G_2$ are produced, then
$
\max\{|V(G_1)|,|V(G_2)|\}<|V(G_m)|.
$
\end{lemma}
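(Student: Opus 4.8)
The plan is to treat the two parts separately, since each hinges on a different feature of the \texttt{FindSep}/\texttt{Div} step. Throughout I would write $p(\cdot)$ for the normalized probability $\overline{P^{\theta}}$ of the segment currently being processed, and use that $\mathrm{Div}$ resets $p(X^*)=0$ in both children and that $s(X)=\tfrac12\,|p(W_1^*(X))-p(W_2^*(X))|$ for every vertex $X$.

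For part (a), the first step is to record what the step does when a leaf $X^*$ is selected: a leaf has a single lobe $b=G_m\setminus\{X^*\}$, so one wing equals $b$ and the other is empty, whence $\mathrm{Div}$ returns one child equal to all of $G_m$ (with $p(X^*)$ reset to $0$) and the other equal to the singleton $\{X^*\}$; the latter is discarded because its vertex set lies in $\mathcal{I}$. Thus the only lasting effect is that $X^*$ survives, still as a leaf, with probability $0$. The second step is to observe that $X^*$ stays a leaf in every later segment that contains it: a degree-one vertex of a tree keeps degree at most one in any connected subtree, and if its unique neighbor is absent it becomes an isolated singleton, which is removed. Hence in any future segment its single lobe carries all the mass, so $s(X^*)=\tfrac12$. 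The final and decisive step is to show $X^*$ can never again attain the minimum of $s$: by Proposition~\ref{prop:exist} every segment has a separator $v$, and by Proposition~\ref{prop:2/3} its lobes admit wings each of probability $<2/3$, so for that partition $s(v)=\tfrac12|p(W_1)-p(W_2)|<\tfrac13$. Since \texttt{FindSep} returns the $s$-minimizer and $s(X^*)=\tfrac12$ strictly exceeds $s(v)\le\tfrac13$, the leaf $X^*$ is never returned again, which is exactly (a).

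For part (b), I would first reduce the size inequality to a purely combinatorial statement about the wings. By construction $G_1=G_m\setminus W_2^*(X^*)$ and $G_2=G_m\setminus W_1^*(X^*)$, and since the lobes partition $V(G_m)\setminus\{X^*\}$ we have $|V(G_i)|=|V(G_m)|-|W_{3-i}^*|$. Therefore $\max\{|V(G_1)|,|V(G_2)|\}<|V(G_m)|$ holds if and only if both wings are non-empty. The substance of the lemma is then to prove that a non-leaf separator yields two non-empty wings. A non-leaf $X^*$ has degree $\ge 2$, hence at least two lobes, so a genuine bipartition into two non-empty wings exists; the remaining task is to verify that \texttt{FindSep} actually selects such a bipartition.

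The step I expect to be the main obstacle is precisely this non-emptiness claim, because it interacts with the presence of zero-probability lobes (vertices set to probability $0$ as previously chosen separators). When at least two lobes carry positive mass, I would argue that any one-sided split $\{\text{all lobes},\emptyset\}$ is strictly improved by moving a single lobe $b$ to the empty wing, since $\tfrac12|p(\text{lobes})-2p(b)|<\tfrac12\,p(\text{lobes})$ whenever $0<p(b)<p(\text{lobes})$; hence the $s$-minimizing partition is proper and both wings are non-empty. In the degenerate case where at most one lobe has positive mass, a proper split attains the same optimal value of $s$, so under the convention that \texttt{FindSep} partitions the two-or-more lobes into two wings this proper split is the one returned. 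In either case both wings are non-empty, $|W_1^*|,|W_2^*|\ge 1$, and the claimed strict size decrease follows. I would write out the positive-mass case in full and dispose of the degenerate case with this tie-breaking remark.
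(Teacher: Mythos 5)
Your part (b) is correct and is actually more careful than the paper's own proof. The paper simply asserts that for a non-leaf separator $|V(W_2^*(X^*))|\neq 0$ and deduces the strict size decrease; you prove that assertion, by reducing the claim to non-emptiness of both wings, showing that when two or more lobes carry positive mass any one-sided split is strictly improved by moving a single lobe across, and isolating the genuine degeneracy (all mass on at most one lobe, which can happen because Div zeroes out previously chosen separators) where all splits tie and only a tie-breaking convention in FindSep saves the statement. Flagging that degenerate case and resolving it by convention is a real improvement over the paper, which glosses over it.

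Part (a) follows the same skeleton as the paper (the chosen leaf keeps probability zero, remains a leaf in every later segment, hence has unbalancedness exactly $\tfrac12$; one then exhibits a vertex with strictly smaller unbalancedness), but your witness has a gap. You invoke Proposition \ref{prop:exist} to get a separator $v$ in the current segment and Proposition \ref{prop:2/3} to conclude $s(v)<\tfrac13$. However, the segment's normalized distribution $\overline{P^{\theta}}$ has zeros at all previously chosen separators --- that is the very situation part (a) is about --- while the paper's propositions are stated under the standing assumption that the root distribution is positive; moreover, a separator in the paper's strict sense (every lobe of probability $<\tfrac12$) need not exist once a lobe probability equals exactly $\tfrac12$. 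Concretely, on a path $a$--$b$--$c$ with normalized probabilities $(\tfrac12,\tfrac14,\tfrac14)$ no vertex is a separator: the lobe $\{b,c\}$ of $a$ and the lobe $\{a\}$ of $b$ both have probability exactly $\tfrac12$, and the lobe $\{a,b\}$ of $c$ has probability $\tfrac34$. Such ties are not pathological here precisely because Div keeps inserting zero-probability vertices. The paper's own argument avoids this entirely: any vertex $X'$ with $\overline{P^{\theta}}(X')>0$ satisfies $s(X')\le\frac12\bigl(\overline{P^{\theta}}(W_1^*(X'))+\overline{P^{\theta}}(W_2^*(X'))\bigr)=\frac12\bigl(1-\overline{P^{\theta}}(X')\bigr)<\frac12$, and such a vertex exists in every retained segment (otherwise $V(G_i)\subseteq\mathcal{I}$ and the segment is discarded). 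Replacing your separator-based witness with this one-line bound (or reworking Propositions \ref{prop:exist} and \ref{prop:2/3} with non-strict inequalities) closes the gap; the rest of your argument stands.
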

\vspace{-5mm}
\begin{proof}
(a) Suppose a leaf variable $X$ is chosen as the separator in one of the rounds of the algorithm. Consequently, its probability will be set to zero in the segment containing it. Since $X$ is a leaf, one of its wings will be empty, and hence after normalization of the probability in function FindSep$(\cdot)$, the wings of $X$ will have probabilities 0 and 1, while any other variable $X'$ in the segment with non-zero probability, will have wings that are balancing the measure $1-\overline{P^{\theta}}(X')$ (Note that all the variables in this segment cannot have zero probability, because the distribution is assumed to be positive and also all the other variables could not have been picked as separators before, otherwise the algorithm would not have kept this segment). Therefore, the function FindSep$(\cdot)$ will not choose $X$.\\
(b) Since $X^*$ is not a leaf, $|V(W_2^*(X^*))|\neq 0$, and since $|V(G_1)|=|V(G_m)|-|V(W_2^*(X^*))|$, we have $|V(G_1)|<|V(G_m)|$. Similarly $|V(G_2)|<|V(G_m)|$.
\end{proof}
\vspace{-3mm}
\begin{theorem}
ProBal algorithm runs in time $O(n^3)$.
\end{theorem}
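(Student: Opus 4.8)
The plan is to write the total running time as the number of iterations of the main \emph{while} loop in Algorithm \ref{alg:ProBal}, times the worst-case cost of a single iteration, and to show that the former is $O(n)$ and the latter is $O(n^2)$.

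First I would bound the per-iteration cost. An iteration performs three tasks: selecting $G_m=\arg\max_{G\in\mathcal{G}}P^{\theta}(G)$, calling $\mathrm{FindSep}(G_m)$, and calling $\mathrm{Div}(G_m,X^*)$ together with the bookkeeping on $\mathcal{G}$. The argmax is a single scan of $\mathcal{G}$; since $\mathcal{G}$ grows by at most one segment per iteration, $|\mathcal{G}|=O(n)$ over the whole run, so this scan costs $O(n)$. The function $\mathrm{Div}$ only deletes one wing, sets the probability of $X^*$ to zero, and splits a vertex set, all of which is $O(|V(G_m)|)=O(n)$. The dominant term is $\mathrm{FindSep}$: for every vertex $X\in G_m$ one must compute the probabilities of the lobes of $X$ (the components of $G_m\setminus X$) and then find the balanced partition of these lobes into $W_1^*(X)$ and $W_2^*(X)$. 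Computing the lobe probabilities of a single $X$ takes $O(|V(G_m)|)$ via one traversal, while partitioning the lobes is done by sorting their probabilities and a greedy sweep, as in the proof of Proposition \ref{prop:2/3}, which is lower order; summed over all $X\in G_m$ the total is dominated by the $O(|V(G_m)|^2)=O(n^2)$ traversal cost. Hence a single iteration runs in $O(n^2)$.

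Next I would bound the number of iterations by $O(n)$. Each iteration removes exactly one segment from $\mathcal{G}$, so the iteration count equals the number of segments ever inserted, i.e.\ the number of nodes of the binary decomposition tree whose root is $T$ and whose children of a segment are the kept outputs of $\mathrm{Div}$. Here Lemma \ref{lem:alg} is the essential tool: part (b) guarantees that whenever the chosen separator $X^*$ is not a leaf, both produced segments $G_1$ and $G_2$ are strictly smaller than $G_m$, so the vertex-set size strictly decreases along every such split; part (a) guarantees that any leaf is selected as a separator at most once over the whole run, so leaf-separator iterations contribute at most $n$ in total. Combining the two facts, no vertex set of a given size can be revisited indefinitely, the decomposition tree has $O(n)$ nodes, and the loop therefore executes $O(n)$ times.

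The main obstacle is precisely this iteration count, because an iteration need not enlarge $\mathcal{I}$: a separator $X^*$ whose probability has already been zeroed can in principle be re-selected, and in the leaf case one output segment can share the vertex set of $G_m$, so termination cannot be read off the loop guard $|\mathcal{I}|<m$ alone and must instead be extracted from Lemma \ref{lem:alg}. Once the count is fixed at $O(n)$, multiplying by the $O(n^2)$ per-iteration cost yields the claimed $O(n^3)$ bound.
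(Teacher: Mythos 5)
Your proposal is correct and follows essentially the same route as the paper's own (much terser) proof: bound the cost of one round by the $O(n^2)$ cost of FindSep (one linear DFS-style lobe computation per vertex, with Div and the bookkeeping lower order), invoke Lemma \ref{lem:alg} to bound the number of rounds by $O(n)$, and multiply. Your write-up actually supplies more detail than the paper does --- in particular you correctly identify that the loop guard $|\mathcal{I}|<m$ alone cannot bound the iteration count and that the termination argument must come from parts (a) and (b) of Lemma \ref{lem:alg}, a point the paper passes over silently.
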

\vspace{-5mm}
\begin{proof}
One can find the probability of each lobe of a vertex in linear time by running Depth-first search (DFS) algorithm. Therefore, FindSep$(\cdot)$ runs in time $O(n^2)$. Additionally, Div$(\cdot)$ runs in $o(n^2)$. From Lemma \ref{lem:alg}, ProBal algorithm will end in at most $n$ rounds. Thus, the time complexity of the algorithm is $O(n^3)$.
\end{proof}

\vspace{-3mm}
\subsection{Analysis}

In this subsection we find bounds on the performance of ProBal algorithm. We will show that in the case of a uniform prior, the proposed algorithm is a $\rho$-approximation algorithm, where $\rho$ is independent of the order of the graph.
\vspace{-3mm}
\begin{theorem}
\label{thm:UP1}
%Using the experiment set containing $m$ experiments obtained from the ProBal algorithm, the loss $\hat{L}_{\theta}(\mathcal{E})$ is upper bounded as follows:
After running ProBal algorithm for $r$ rounds and obtaining the experiment set $\mathcal{E}$, the loss $\hat{L}_{\theta}(\mathcal{E})$ is upper bounded as
\vspace{-3mm}
\[
\hat{L}_{\theta}(\mathcal{E})\le(\frac{2}{3})^{\lfloor \log_2(r+1) \rfloor}n.
\]
\vspace{-3mm}
\end{theorem}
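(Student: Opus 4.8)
The plan is to decouple the two factors in $\hat L_{\theta}(\mathcal{E})=\sum_{j}P^{\theta}(C_j)|C_j|$: bound the \emph{probability} of each leftover component by the largest segment probability the algorithm produces, and bound the total \emph{size} trivially by $n$. Since the components $\{C_j\}$ are disjoint subsets of $V(T)$, we have $\sum_j|C_j|\le n$, so
\[
\hat L_{\theta}(\mathcal{E})\le\Big(\max_j P^{\theta}(C_j)\Big)\sum_j|C_j|\le n\max_j P^{\theta}(C_j).
\]
Each component $C_j$ of $T\backslash\mathcal{I}$ is a connected piece lying inside a single segment maintained by the algorithm, because the chosen separators $\mathcal{I}$ are exactly the cut vertices separating the segments. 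Hence $\max_j P^{\theta}(C_j)$ is at most the largest probability of any segment present after $r$ rounds, and it suffices to show that this maximal segment probability is at most $(2/3)^{\lfloor\log_2(r+1)\rfloor}$.

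For that I would abstract the algorithm as a splitting process on the probabilities of the current segments, viewed as the leaves of the binary ``segment tree''. Initially there is a single leaf of probability $1$. By Proposition~\ref{prop:exist} a separator always exists, and by Proposition~\ref{prop:2/3} the two wings it induces each carry less than $2/3$ of the renormalized mass; thus each round removes the maximum-probability leaf $p$ and replaces it with two leaves of probability $<\tfrac23 p$. The substance of the theorem is that always splitting the \emph{largest} leaf keeps the process balanced.

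The key lemma is: after $2^d-1$ rounds every segment has probability $\le(2/3)^d$, which I would prove by induction on $d$. The cases $d=0,1$ are immediate. For the step, after $2^d-1$ rounds there are at most $2^d$ leaves, each $\le(2/3)^d$ by the hypothesis; call a leaf \emph{large} if its probability exceeds $(2/3)^{d+1}$. Whenever a large leaf exists the maximum leaf is itself large, and splitting it (probability $\le(2/3)^d$) yields two children of probability $<(2/3)^{d+1}$, i.e.\ two non-large leaves; so each subsequent round strictly decreases the number of large leaves. Since at most $2^d$ leaves were large, after $2^d$ further rounds (a total of $2^{d+1}-1$) no large leaf remains, giving the claim for $d+1$. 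Finally, splitting the maximum never increases the maximum, so the maximal segment probability is non-increasing in the number of rounds; taking $d=\lfloor\log_2(r+1)\rfloor$ yields $2^d-1\le r$, hence after $r$ rounds the maximum is $\le(2/3)^d$, and the theorem follows from the displayed reduction.

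The step I expect to be the main obstacle is making the balancing rigorous, namely turning the split-the-largest rule into the clean doubling recursion ($2^d-1$ rounds to reach level $(2/3)^d$) and correctly accounting for rounds in which a segment is discarded (the star or already-intervened cases), which only reduce the leaf count and therefore cannot hurt the bound. A secondary point needing care is the reduction itself: checking that every residual component sits inside one segment so that $\max_j P^{\theta}(C_j)$ is genuinely controlled by the maximal segment probability. For the uniform prior of interest one can instead bound $\max_j|C_j|\le(2/3)^d n$ directly and invoke Corollary~\ref{cor:unif}, which cleanly sidesteps the singleton components created by discarded stars.
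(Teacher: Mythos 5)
Your proposal follows essentially the same route as the paper's proof: the identical key lemma (after $2^d-1$ rounds every segment in $\mathcal{G}$ has probability at most $(\frac{2}{3})^d$, proved by induction using Proposition~\ref{prop:2/3} together with the fact that ProBal always splits the maximum-probability segment), followed by the same decoupling $\hat{L}_{\theta}(\mathcal{E})\le \bigl(\max_j P^{\theta}(C_j)\bigr)\sum_j|C_j|\le n\max_j P^{\theta}(C_j)$ via the claim that each residual component lies inside a single segment. Your induction step---a monovariant argument counting ``large'' leaves---is just a cleaner bookkeeping of the paper's pigeonhole case analysis (``a different segment divided in each round'' versus ``some segment divided more than once''), and the discarded-star issue you flag is glossed over by the paper as well, so the two arguments coincide in substance.
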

\vspace{-3mm}
\begin{proof}
First we claim that with $r=2^k-1$, the set $\mathcal{G}$ defined in the algorithm which contain at most $2^k$ segments, has the property
$\max_{G\in\mathcal{G}}P^{\theta}(G)\le(\frac{2}{3})^k$.
We use induction to prove this claim. The base of the induction is clear from Proposition \ref{prop:2/3} and the fact that we set the probability of the separator itself to zero in function Div$(\cdot)$. For the induction step we need to show that after $r=2^{k+1}-1$ rounds $\max_{G\in\mathcal{G}}P^{\theta}(G)\le(\frac{2}{3})^{k+1}$.
By the induction hypothesis with $r=2^k-1$ rounds $\max_{G\in\mathcal{G}}P^{\theta}(G)\le(\frac{2}{3})^k$. Now, after the extra $2^k$ rounds, if a different segment were divided in each of those rounds, by Proposition \ref{prop:2/3}, the desired result could be concluded. Otherwise, at least one of the segments, say $G'$, was not divided while there exists another segment, say $G''$, which was divided more than once.
This implies that in the second dividing of $G''$, at least one of its sub-segments, say $G_1''$, obtained from the first division, had a larger probability than $P^{\theta}(G')$. But by Proposition \ref{prop:2/3}, we have $P^{\theta}(G_1'')\le\frac{2}{3}(\frac{2}{3})^{k}$. Therefore, $P^{\theta}(G')\le(\frac{2}{3})^{k+1}$.\\
Each component $C_j$ belongs to a segment $G\in\mathcal{G}$. Hence, by the claim above, for all $j$, $P^{\theta}(C_j)\le(\frac{2}{3})^k$. This concludes
%\begin{align*}
$\hat{L}_{\theta}(\mathcal{E})=\sum_{j=1}^JP^{\theta}(C_j)|C_j|
%&\le\sum_{j=1}^J(\frac{2}{3})^k|C_j|\\
\le(\frac{2}{3})^kn
\le(\frac{2}{3})^{\lfloor \log_2(r+1) \rfloor}n$.
%\end{align*}
\end{proof}
\vspace{-3mm}
In the following we prove that in the case of uniform prior on bounded-degree graphs,  ProBal is a $\rho$-approximation algorithm, where $\rho$ is independent of the order of the graph. To this end, we first obtain a lower bound on the loss of the optimum algorithm introduced in Section \ref{sec:optsol}.
\begin{lemma}
\label{lem:LB}
Consider tree $T$ of order $n$ with maximum degree $\Delta(T)$ and uniform probability distribution over the location of the root of the tree. %Let $\mathcal{E}^*=\arg\min_{\mathcal{E}}\hat{L}_{\theta}(\mathcal{E})$ in which we intervene on $m$ variables that minimize the average loss $\hat{L}_{\theta}$.
Then for any experiment set $\mathcal{E}$, we have
%\vspace{-2mm}
$
\frac{(n-m)^2}{n(\Delta(T)m-1)}\le\hat{L}_{\theta}(\mathcal{E}).
$
%\vspace{-7mm}
\end{lemma}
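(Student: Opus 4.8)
The plan is to reduce the statement to a purely combinatorial estimate on the number of connected components of the forest $T\backslash\mathcal{I}$, where $\mathcal{I}$ is the set of $m$ intervened vertices. Since the prior is uniform, Corollary~\ref{cor:unif} already gives $\hat{L}_{\theta}(\mathcal{E})=\frac{1}{n}\sum_{j=1}^J|C_j|^2$, so it suffices to lower bound $\sum_{j=1}^J|C_j|^2$ subject to the constraints that an experiment set of size $m$ imposes on the components. Two such constraints are immediate: deleting the $m$ intervened vertices from the $n$-vertex tree leaves $\sum_{j=1}^J|C_j|=n-m$ vertices, and the number of pieces $J$ cannot be arbitrarily large. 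The whole argument hinges on making the second fact quantitative, and that is the step I expect to be the crux.

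First I would bound $J$ by an edge-counting argument. A tree on $n$ vertices has $n-1$ edges, so the forest $T\backslash\mathcal{I}$ has $(n-1)$ minus the number of edges of $T$ incident to $\mathcal{I}$, and therefore $J=|V(T\backslash\mathcal{I})|-|E(T\backslash\mathcal{I})|=1-m+\bigl(\text{edges incident to }\mathcal{I}\bigr)$. The number of edges incident to $\mathcal{I}$ is at most $\sum_{v\in\mathcal{I}}d_T(v)\le\Delta(T)\,m$, since each intervened vertex has degree at most $\Delta(T)$ and this sum over-counts edges internal to $\mathcal{I}$. Hence $J\le 1-m+\Delta(T)m=m(\Delta(T)-1)+1$, and for $m\ge 2$ this is at most $\Delta(T)m-1$, which is exactly the denominator appearing in the claimed bound. (An equivalent way to see the same estimate is that removing one vertex of degree at most $\Delta(T)$ increases the number of components by at most $\Delta(T)-1$.)

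With the component count in hand, the final step is a single application of the Cauchy--Schwarz inequality. Applying $\bigl(\sum_{j=1}^J|C_j|\bigr)^2\le J\sum_{j=1}^J|C_j|^2$ to the size identity $\sum_{j=1}^J|C_j|=n-m$ yields $\sum_{j=1}^J|C_j|^2\ge (n-m)^2/J$. Substituting into Corollary~\ref{cor:unif} and then using $J\le\Delta(T)m-1$ gives
\[
\hat{L}_{\theta}(\mathcal{E})=\frac{1}{n}\sum_{j=1}^J|C_j|^2\ge\frac{(n-m)^2}{nJ}\ge\frac{(n-m)^2}{n(\Delta(T)m-1)},
\]
which is the desired inequality. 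Note that the bound holds for \emph{every} experiment set $\mathcal{E}$ of size $m$ because both the size constraint $\sum_j|C_j|=n-m$ and the component bound $J\le\Delta(T)m-1$ are valid for any choice of $\mathcal{I}$; in particular it applies to the optimum of Section~\ref{sec:optsol}, which is the use we will make of it in the approximation-ratio analysis.

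The main obstacle, and the only place where the tree structure and the degree bound really enter, is the component estimate $J\le\Delta(T)m-1$; the size identity and the Cauchy--Schwarz step are routine. The one subtlety to flag is matching the exact constant in the denominator: the clean edge-counting bound produces $m(\Delta(T)-1)+1$, which coincides with $\Delta(T)m-1$ only once $m\ge2$, so I would either restrict to that regime or absorb the $O(1)$ discrepancy, noting that it does not affect the order-independent approximation ratio that is the ultimate goal.
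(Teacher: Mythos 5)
Your proposal follows essentially the same route as the paper's proof: invoke Corollary~\ref{cor:unif}, lower bound $\sum_j |C_j|^2$ by $(n-m)^2/J$ (the paper does this via the ``all components equal'' convexity step, you via Cauchy--Schwarz --- the same estimate), and then upper bound the number of components $J$. The one substantive difference is the crux step you correctly identified: the paper simply \emph{asserts} that ``since the tree is connected, the maximum number of components created by an experiment set of size $m$ is $\Delta(T)m-1$,'' whereas you actually prove a component bound by edge counting and obtain the tight value $J \le m(\Delta(T)-1)+1$. Your caution about the constant is warranted, and in fact exposes a small error in the paper: for $m=1$ the asserted bound $J\le\Delta(T)m-1=\Delta(T)-1$ is false (removing the center of a star creates $\Delta(T)$ singleton components), and with that example the inequality of Lemma~\ref{lem:LB} itself fails, since the achieved loss is $\frac{n-1}{n}$ while the claimed lower bound is $\frac{(n-1)^2}{n(n-2)}>\frac{n-1}{n}$. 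Your restriction to $m\ge 2$, where $m(\Delta(T)-1)+1\le\Delta(T)m-1$, is exactly the hypothesis needed to recover the paper's stated denominator, and as you note it is immaterial for the order-independent approximation ratio that the lemma feeds into.
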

\vspace{-3mm}
\begin{proof}
For $J$ components, from Corollary \ref{cor:unif}, the loss function may be lower bounded as follows
%\begin{align*}
$\hat{L}_{\theta}(\mathcal{E})=\frac{1}{n}\sum_{j=1}^J|C_j|^2\ge \frac{1}{n}J(\frac{n-m}{J})^2.$
%\end{align*}
Since the tree is connected, the maximum number of components created by an experiments set of size $m$ is $\Delta(T)m-1$, which implies the result.
\end{proof}
\vspace{-3mm}
\begin{theorem}
Consider tree $T$ of order $n$ with maximum degree $\Delta(T)$.
In the case of uniform prior distribution, if $m\le\epsilon n$, ProBal is a $\rho$-approximation algorithm, where
$
\rho=\frac{\frac{3}{2}(m\wedge r)^{\log_2\frac{2}{3}}(\Delta(T)m-1)}{(1-\epsilon)^2},
$
where $m\wedge r \coloneqq \min\{r,m\}$.
%which is constant in $n$, and linear in $m$ and $\Delta(T)$.
$\rho$ is constant in $n$, polynomial of degree less than $0.42$ in $m$, and linear on $\Delta(T)$.
\end{theorem}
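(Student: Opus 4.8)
The plan is to read $\rho$ off as the worst-case ratio $\hat{L}_{\theta}(\mathcal{E}_{\mathrm{ProBal}})/\hat{L}_{\theta}(\mathcal{E}_{\mathrm{OPT}})$ and to sandwich it between the two bounds already established: an \emph{upper} bound on ProBal's loss from Theorem~\ref{thm:UP1} and a \emph{lower} bound on the optimal loss from Lemma~\ref{lem:LB}. Both bounds carry a factor of $n$ that will cancel, which is precisely what yields the independence-of-order claim.

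First I would fix notation: let $r$ be the number of rounds ProBal runs before termination and set $t = m \wedge r$. I would argue that ProBal's loss is at most $(\tfrac23)^{\lfloor \log_2(t+1)\rfloor} n$. Theorem~\ref{thm:UP1} gives exactly this with $r$ in place of $t$; to pass to $t$, note that each round adds at most one vertex to $\mathcal{I}$, so if the algorithm halts because the budget is exhausted then $r \ge m$ and, since $(\tfrac23)^{\lfloor \log_2(\cdot+1)\rfloor}$ is non-increasing, replacing $r$ by $m = m\wedge r$ only weakens the bound; if instead it halts with $\mathcal{G}=\emptyset$ then $t=r$ and Theorem~\ref{thm:UP1} applies verbatim (the loss is then $0$). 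Either way the stated upper bound in terms of $t$ holds.

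Next I would divide by the lower bound $\hat{L}_\theta(\mathcal{E}_{\mathrm{OPT}}) \ge (n-m)^2/\big(n(\Delta(T)m-1)\big)$ from Lemma~\ref{lem:LB}, and use $m\le\epsilon n$ to write $(n-m)^2 \ge (1-\epsilon)^2 n^2$. After the $n^2$ factors cancel this yields
\[
\frac{\hat{L}_\theta(\mathcal{E}_{\mathrm{ProBal}})}{\hat{L}_\theta(\mathcal{E}_{\mathrm{OPT}})} \le \frac{(\tfrac23)^{\lfloor \log_2(t+1)\rfloor}(\Delta(T)m-1)}{(1-\epsilon)^2}.
\]
The remaining work is the elementary inequality $(\tfrac23)^{\lfloor \log_2(t+1)\rfloor} \le \tfrac32\, t^{\log_2(2/3)}$: since $\lfloor \log_2(t+1)\rfloor \ge \log_2 t - 1$ and the base $\tfrac23<1$ is raised to a larger exponent, we get $(\tfrac23)^{\lfloor \log_2(t+1)\rfloor} \le (\tfrac23)^{\log_2 t - 1} = \tfrac32 (\tfrac23)^{\log_2 t} = \tfrac32\, t^{\log_2(2/3)}$, using $(\tfrac23)^{\log_2 t}=t^{\log_2(2/3)}$. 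Substituting reproduces the claimed $\rho$ exactly.

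Finally I would verify the three scaling statements. There is no $n$ anywhere in $\rho$, giving constancy in the order. The $m$-dependence is $t^{\log_2(2/3)}(\Delta(T)m-1)$ with $t\le m$, so the exponent of $m$ is at most $\log_2(2/3)+1 = \log_2(4/3) \approx 0.415 < 0.42$; and $\Delta(T)$ enters only through the linear factor $\Delta(T)m-1$. The main obstacle is not any single estimate but the bookkeeping around $m\wedge r$ and the floor: one must justify the monotonicity in the number of rounds and treat the early-termination case carefully so that the round count $r$ may legitimately be replaced by the budget $m$, and then confirm that the slack in $\lfloor \log_2(t+1)\rfloor \ge \log_2 t - 1$ is exactly what is absorbed into the $\tfrac32$ prefactor while keeping the $m$-degree below the advertised $0.42$.
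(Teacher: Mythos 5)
Your proposal is correct and follows exactly the paper's (very terse) proof, which simply combines the upper bound of Theorem~\ref{thm:UP1} on ProBal's loss with the lower bound of Lemma~\ref{lem:LB} on the optimal loss; your contribution is to make explicit the cancellation of $n$, the inequality $(\tfrac{2}{3})^{\lfloor \log_2(t+1)\rfloor} \le \tfrac{3}{2}\, t^{\log_2(2/3)}$, and the $m\wedge r$ bookkeeping, all of which the paper leaves implicit. One tiny remark: your case split on why $r$ can be replaced by $t=m\wedge r$ is unnecessary (and slightly misstated, since $r>m$ can also occur when $\mathcal{G}=\emptyset$ triggers termination), because $t\le r$ always holds and the monotonicity of $(\tfrac{2}{3})^{\lfloor\log_2(\cdot+1)\rfloor}$ alone gives the needed direction.
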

\vspace{-5mm}
\begin{proof}
Proof is immediate from Theorem \ref{thm:UP1} and Lemma \ref{lem:LB}.
%If $m\le r$, the proof is immediate from Theorem \ref{thm:UP1} and Lemma \ref{lem:LB}. In the case of $m>r$, all the segments have become stars, and hence all the components have been of order 1, which implies that $\hat{L}_{\theta}(\mathcal{E})\le 1$
\end{proof}
\vspace{-5mm}
\subsection{Minimax Setting}
\label{subsec:mmex}

As shown in \eqref{eq:mMcomp} in the minimax setting, we need to minimize the size if the largest component generated from the set of intervention variables.
If in ProBal, instead of balancing the probability of existence of the root variable in wings, we chose the separators to balance the number of vertices in the wings, then regardless of the probability distribution $P^{\theta}$, it would guarantee that in each round of the algorithm, the largest segment $G_m$  would be divided into two segments, where each of these segments would contain at most $\frac{2}{3}|V(G_m)|+1$ vertices. Therefore, at most the orientation of $\frac{2}{3}|V(G_m)|$ of the edges in $G_m$ may not be found.
This is equivalent to running ProBal algorithm on a uniform distribution $P^{\theta}$.

\begin{theorem}
\label{thm:mM}
Consider tree $T$ of order $n$ with maximum degree $\Delta(T)$. Let $\hat{L}_{W}(\mathcal{E})$ be the minimax loss of the experiment set $\mathcal{E}$.\\
(a)
After running ProBal for $r$ rounds and obtaining the experiment set $\mathcal{E}$, the loss $\hat{L}_{W}(\mathcal{E})$ is upper bounded as follows:
$
\hat{L}_{W}(\mathcal{E})\le(\frac{2}{3})^{\lfloor \log_2(r+1) \rfloor}n.
$\\
(b)
For any experiment set $\mathcal{E}$, we have
$
\frac{n-m}{\Delta(T)m-1}\le\hat{L}_{W}(\mathcal{E}).
$\\
(c)
If $m\le\epsilon n$, ProBal is a $\rho$-approximation algorithm, where
$
\rho=\frac{\frac{3}{2}(m\wedge r)^{\log_2\frac{2}{3}}(\Delta(T)m-1)}{1-\epsilon}.
$
\end{theorem}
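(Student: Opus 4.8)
The plan is to handle the three parts in order, using two structural facts set up earlier: by \eqref{eq:mMcomp} the minimax loss equals the order of the largest remaining component, $\hat{L}_W(\mathcal{E})=\max_j|C_j|$ (putting all root-mass on the largest $C_j$ attains the max), and the minimax variant of ProBal balances \emph{vertex counts} in the two wings rather than probability mass, which is literally ProBal run under the uniform measure assigning weight $1/n$ to each vertex. With this reduction in hand, parts (a) and (b) become minimax analogues of Theorem \ref{thm:UP1} and Lemma \ref{lem:LB}.

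For part (a), I would rerun the induction of Theorem \ref{thm:UP1} with the balanced quantity taken to be $|V(G)|$ instead of $P^\theta(G)$. Because the uniform measure makes $P^\theta(G)$ proportional to $|V(G)|$, Propositions \ref{prop:exist} and \ref{prop:2/3} transfer verbatim, so after $r=2^k-1$ rounds the segment set obeys $\max_{G\in\mathcal{G}}|V(G)|\le(\tfrac23)^k n$. Since every component $C_j$ of $T\setminus\mathcal{I}$ lies inside some segment $G\in\mathcal{G}$, we get $\max_j|C_j|\le\max_{G\in\mathcal{G}}|V(G)|$, hence $\hat{L}_W(\mathcal{E})\le(\tfrac23)^{\lfloor\log_2(r+1)\rfloor}n$, exactly as in the Bayesian bound.

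For part (b), I would replace the convexity argument of Lemma \ref{lem:LB} by a simple pigeonhole. For any $\mathcal{E}$ of size $m$, the forest $T\setminus\mathcal{I}$ has vertex orders summing to $n-m$, spread over at most $J\le\Delta(T)m-1$ components (the same counting fact invoked in Lemma \ref{lem:LB}). The largest component is then at least the average, so $\hat{L}_W(\mathcal{E})=\max_j|C_j|\ge\frac{n-m}{J}\ge\frac{n-m}{\Delta(T)m-1}$; since this holds for every $\mathcal{E}$, it lower-bounds the optimum.

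For part (c), I would divide the bound of (a) by that of (b), taking the effective number of rounds to be $m\wedge r$. The key algebraic step is the floor-to-power conversion: with $s=m\wedge r$, using $\lfloor x\rfloor>x-1$ and $(\tfrac23)^{\log_2 s}=s^{\log_2\frac23}$, and noting $\log_2\tfrac23<0$ makes $s\mapsto s^{\log_2\frac23}$ decreasing, one gets $(\tfrac23)^{\lfloor\log_2(s+1)\rfloor}\le\tfrac32(s+1)^{\log_2\frac23}\le\tfrac32 s^{\log_2\frac23}$. Combined with $\frac{n}{n-m}\le\frac{1}{1-\epsilon}$ (from $m\le\epsilon n$), the factor $n$ cancels and the ratio collapses to $\rho=\frac{\frac32(m\wedge r)^{\log_2\frac23}(\Delta(T)m-1)}{1-\epsilon}$. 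The main obstacle I anticipate is not any individual step but pinning down this floor-to-power estimate tightly enough to recover the exact exponent $\log_2\tfrac23$ and constant $\tfrac32$, and justifying rigorously that the vertex-balancing variant inherits Proposition \ref{prop:2/3} so that the Theorem \ref{thm:UP1} induction carries over without change.
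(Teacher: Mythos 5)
Your proposal is correct and follows essentially the same route as the paper: part (a) by reducing the vertex-balancing variant to ProBal under the uniform prior and invoking Theorem \ref{thm:UP1}, part (b) by the same component-counting/averaging argument as in Lemma \ref{lem:LB} (the largest of at most $\Delta(T)m-1$ components covering $n-m$ vertices is at least the average), and part (c) by combining the two bounds. The paper states part (c) as immediate; your explicit floor-to-power algebra $(\tfrac{2}{3})^{\lfloor\log_2(s+1)\rfloor}\le\tfrac{3}{2}s^{\log_2\frac{2}{3}}$ and the $\frac{n}{n-m}\le\frac{1}{1-\epsilon}$ step simply fill in the details the paper leaves out.
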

\vspace{-4mm}
\begin{proof}
Since the performance of ProBal for the minimax case is the same as the Bayesian case with uniform distribution, the proof of part (a) follows from Theorem \ref{thm:UP1}.
For part (b), since the tree is connected, the maximum number of components created by an experiments set of size $m$ is $\Delta(T)m-1$, and we can minimize the order of the largest, by making the orders equal. That is, the order of the largest component is at least $\frac{n-m}{\Delta(T)m-1}$.
The proof of part (c) is immediate from parts (a) and (b).
\end{proof}

\begin{figure*}[!t]
\vskip 0.1in
\begin{center}
\centerline{\includegraphics[scale=0.45]{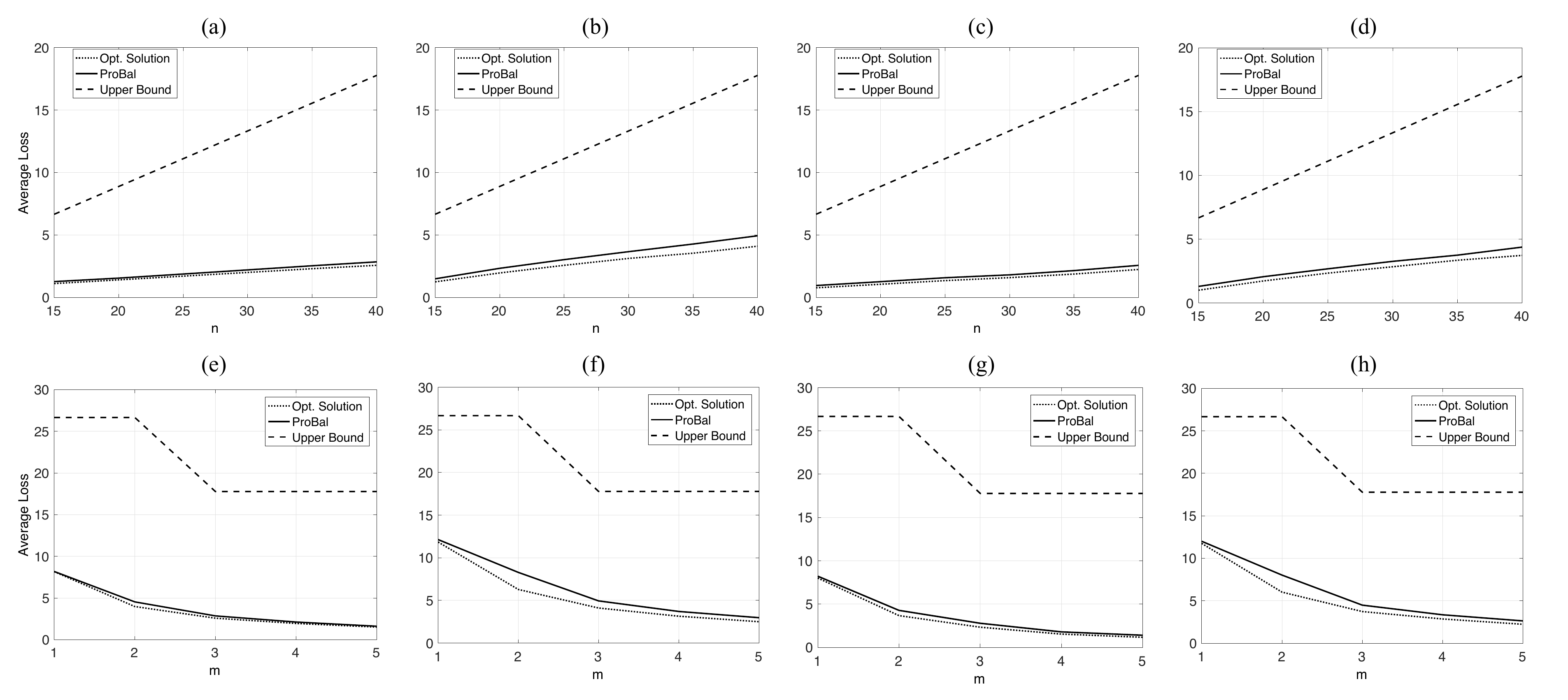}}
\caption{The average loss of ProBal and the optimal solution with respect to the order of the tree (first row), and with respect to the number of the interventions (second row). In the first two columns, $P^{\theta}$ is uniform, while the degree based distribution is used in the simulations for the second two columns. The tree in parts (a), (c), (e) and (g) are created based on Barab\'asi-Albert model model and the bounded degree model is used in the rest.}
\label{fig:synthesized}
\end{center}
\vskip -0.2in
\end{figure*}
%\vspace{-5.5mm}
\section{Experimental Results}
\label{sec:expres}

In this section we evaluate the performance of our proposed algorithm on both synthetic and real data.
%\vspace{-3mm}
\subsection{Synthetic Data}

We generated 1000 instances of trees based on Barab\'asi-Albert (BA) model \cite{barabasi1999emergence, barabasi2016network}, and bounded degree (BD) model created according to Galton-Watson branching process \cite{barabasi2016network}. For both models we considered uniform and degree based distributions for the location of the root of the tree. In the degree based distribution, the probability of vertex $v$ being the root is proportional to its degree.\\
Figure \ref{fig:synthesized} depicts the loss of ProBal as well as the loss for the optimal solution with respect to the order of the tree and the number of the interventions. As shown in this figure, in all cases, the performance of ProBal algorithm is very close to the optimal solution.
There are worst case scenarios where special graphs with specifically designed distributions can reach the upper-bounds, but as seen in Figure \ref{fig:synthesized}, for many distributions, ProBal algorithm works much better than what is predicted by our theoretic upper bound.
%\vspace{-3mm}
\subsection{Real Data}
We examined the performance of ProBal on real data. The graph that we work on is the GRN of E-coli bacteria which we sourced from the RegulonDB database \cite{salgado2006regulondb}.
In GRN, the transcription factors are the main players to activate genes. The interactions between transcription factors and regulated genes in a species genome can be presented by a directed graph. In this graph, links are drawn whenever a transcription factor regulates a gene's expression. Moreover, some of vertices have both functions, i.e., are both transcription factor and regulated gene.
%Figure \ref{fig:EcoliGRN}, shows the GRN of E-coli bacteria, which as mentioned in Section \ref{sec:intro}, has a tree-like structure.
%Figure \ref{fig:componentsGR} depicts the chordal moral components in the GRN that could be obtained after performing the observational test. As seen in this figure, some of the components have a star structure, for which due to the morality property, the location of the root variable is known. Therefore, we do not need to run the ProBal algorithm on star components.
Figure \ref{fig:database} depicts the normalized average loss of ProBal with respect to the total budget for the number of interventions. As seen in this figure, seven interventions are enough to reconstruct more than 95 percent of the network.

%\begin{figure}[t]
%\vskip 0.2in
%\begin{center}
%\centerline{\includegraphics[scale=0.35]{componentsGRN.pdf}}
%\caption{The chordal moral components in the GRN for E-coli.}
%\label{fig:componentsGR}
%\end{center}
%\vskip -0.2in
%\end{figure}

\begin{figure}[t]
\vskip 0.1in
\begin{center}
\centerline{\includegraphics[scale=0.26]{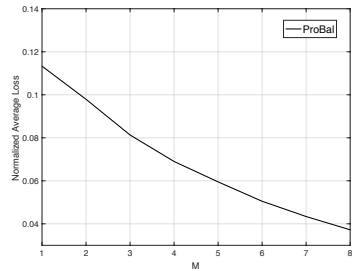}}
\caption{The normalized average loss of ProBal with respect to the total budget for the number of interventions.}
\label{fig:database}
\end{center}
\vskip -0.2in
\end{figure}

\vspace{-3mm}
\section{Conclusion}
\label{sec:conc}

%We studied the problem of experiment design for causal inference for the case of limited number of experiments.
%In our model, each experiment consists of intervening on a single vertex, which makes the model suitable for applications in which intervening on several variables simultaneously is not feasible.
%Also, in our model, experiments are designed merely based on the result of an initial purely observational test, which enables the experimenter to perform the interventional tests in parallel. We assumed that the structure on the variables contains negligible number of triangles compared to the size of the graph. This assumption is satisfied in many applications such as the structure of GRN for some bacteria.
%The main question addressed in this work is that in the described model, for a limited number of experiments, what is the maximum portions of the causal structure that can be learned.
%We first found the optimal solution for this problem and then proposed the ProBal algorithm, which designs the experiments in a time efficient manner.
%We showed that for bounded degree graphs, in the minimax case and in the Bayesian case with uniform prior, our proposed algorithm is a $\rho$-approximation algorithm, where $\rho$ is independent of the order of the underlying graph.
%Finally we examined our proposed algorithm on synthesized as well as real data. The results showed that the performance of the ProBal algorithm is very close to the optimal solution. One direction of future work would be to study this experiment design problem for more general causal structures.

We studied the problem of experiment design for causal inference when only a limited number of experiments are available. In our model, each experiment consists of intervening on a single vertex, which makes the model suitable for applications in which intervening on several variables simultaneously is not feasible. Also, in our model, experiments are designed merely based on the result of an initial purely observational test, which enables the experimenter to perform the interventional tests in parallel. We assumed that the underlying structure on the variables contains negligible number of triangles compared to the size of the graph. This assumption is satisfied in many applications such as the structure of GRN for some bacteria. 
We addressed the following question: ``How much of the causal structure can be learned when only a limited number of experiments are available?'' We characterized the optimal solution to this problem and then proposed an algorithm, which designs the experiments in a time efficient manner. We showed that for bounded degree graphs, in both the minimax setting and the Bayesian settings with uniform prior, our proposed algorithm is a  $\rho$-approximation algorithm, where $\rho$ is independent of the order of the underlying graph. 
We examined our proposed algorithm on synthetic as well as real datasets. The results show that the performance of our proposed algorithm is very close to the optimal theoretical solution. One direction of future work is to extend this experiment design problem to even more general causal structures.

%\pagebreak

\bibliography{ProBal_ref}
\bibliographystyle{icml2016}

\end{document}